\newtheorem{proposition}{Proposition}
\newtheorem{proof}{Proof}
\newcommand{\ie}{\textit{i.e., }}
\newcommand{\eg}{\textit{e.g., }}
\newcommand{\tabincell}[2]{
\begin{tabular}{@{}#1@{}}#2\end{tabular}
}
\newcommand{\temp}[1]{{\color{black}#1}}  
\newcommand{\bihan}[1]{{\color{black}#1}}  
\begin{document}

\title{Reconciliation of Statistical and Spatial Sparsity For Robust Image and Image-Set Classification}
\author{Hao~Cheng, Kim-Hui~Yap,~\IEEEmembership{Member,~IEEE} and~Bihan~Wen,~\IEEEmembership{Member,~IEEE}
\thanks{H. Cheng, K-H. Yap and B. Wen are with the School of Electrical and Electronic Engineering, Nanyang Technological University, 639798 Singapore, e-mails:hao006@e.ntu.edu.sg, (ekhyap, bihan.wen)@ntu.edu.sg}.
}

\markboth{Journal of \LaTeX\ Class Files,~Vol.~14, No.~8, August~2015}%
{Shell \MakeLowercase{\textit{et al.}}: Bare Demo of IEEEtran.cls for IEEE Transactions on Magnetics Journals}

\IEEEtitleabstractindextext{%
\begin{abstract}
Recent image classification algorithms, by learning deep features from large-scale datasets, have achieved significantly better results comparing to the classic feature-based approaches. 
However, there are still various challenges of image classifications in practice, such as classifying noisy image or image-set queries, and training deep image classification models over the limited-scale dataset.
Instead of applying generic deep features, the model-based approaches can be more effective and data-efficient for robust image and image-set classification tasks, as various image priors are exploited for modeling the inter- and intra-set data variations while preventing over-fitting.
In this work, we propose a novel Joint Statistical and Spatial Sparse representation, dubbed \textit{J3S}, to model the image or image-set data for classification, by reconciling both their local patch structures and global Gaussian distribution mapped into Riemannian manifold.
To the best of our knowledge, no work to date utilized both global statistics and local patch structures jointly via joint sparse representation.
We propose to solve the joint sparse coding problem based on the J3S model, by coupling the local and global image representations using joint sparsity.
The learned J3S models are used for robust image and image-set classification.
Experiments show that the proposed J3S-based image classification scheme outperforms the popular or state-of-the-art competing methods over FMD, UIUC, ETH-80 and YTC databases.
\end{abstract}

\begin{IEEEkeywords}
Sparse representation, Gaussian model, Riemannian manifold, Dictionary learning, Visual classification.
\end{IEEEkeywords}}
\maketitle

\IEEEdisplaynontitleabstractindextext

\IEEEpeerreviewmaketitle

\section{Introduction} \label{sec1}
\IEEEPARstart{I}{mage} classification is a fundamental problem in image processing and computer vision.
Comparing to classic algorithms based on pre-defined features, recent image classification schemes applied machine learning techniques to optimize feature representation directly from the data themselves. 
More recently, deep learning approaches for image classification have achieved the state-of-the-art results on many benchmarking datasets, such as the popular ImageNet~\cite{deng2009imagenet}.
Despite of the promising performance achieved under simple and ideal problem setups, there are still various challenges when (i) classification based on queries that contain a set of object variations (\ie image-set classification), or (ii) the image data is limited or with relatively low quality (\ie weakly supervised classification).

To be specific, while conventional classification tasks process a single image in each query, image-set classification~\cite{ytc,eth80,gao2015patch,wang2020graph} has recently gained more attention, in which each query set contains multiple images with strong correlation (e.g., query object with multiple views, poses or illuminations).
Such type of algorithms are widely applied in applications such as video-based face classification~\cite{ytc}, multi-spectral image classification, etc. 
Comparing to the single-image classification algorithms, effective image-set methods need to additionally exploit the hidden structure among image sets, e.g., the inter- and intra-set data variations.
Furthermore, popular deep features tend to be generic and incorporate very little prior knowledge by learning from large-scale, high-quality, and fully annotated training datasets~\cite{deng2009imagenet}. 
Such approaches are ideal with fully supervised learning, but less data-efficient and robust when training sets are small-scale or corrupted (\eg noisy).

Recent works on sparse signal modeling have demonstrated their effectiveness in image representation for various tasks~\cite{zhu2014fast,gao2014concurrent,feng2016kernel,li2016image,cherian2016riemannian,wang2020learning,jing2020learning}.
Comparing to the deep features, sparse representation is model-based, thus much more robust to practical challenges such as noise or over-fitting~\cite{wen2017frist}.
While many existing works focused on exploiting image patch-based sparsity, global statistical properties are typically ignored or failed to be incorporated jointly in a principled approach.
Recent works show that high-order statistics of image features are critical in classification tasks~\cite{Kobayashi2014Dirichlet,Li2015From,li2017second,nguyen2020prominent}, leading to better results comparing to many first-order methods.

In this work, we propose a novel Joint Statistical and Spatial Sparse representation (J3S), \ie learning the coupled dictionaries for both local patch features, and the global data Gaussian distribution mapped into Riemannian manifold. 
Their dictionary-domain sparse coefficients are reconciled by solving a sparse coding problem with joint sparsity.
We propose an efficient yet effective alternating minimization algorithm to solve the J3S sparse coding problem.
To the best of our knowledge, no work to date utilized both global statistics and local patch structures jointly via sparse representation for image classification.
Furthermore, we apply the learned J3S model for robust image-set and single-image classification applications.
Extensive experimental results on material classification, object recognition and video-based face recognition tasks are presented, and we demonstrate that the proposed J3S-based image classification scheme outperforms the popular or state-of-the-art competing methods.

In short, the contributions of this paper include:
\begin{itemize}
 
 \item Learning global statistical and local patch dictionaries for visual classification task by coupling them with joint sparsity;

 \item Utilizing principal component analysis (PCA) to reduce the J3S model complexity while maintaining the effectiveness;

 \item Investigating the robustness of the proposed model under various conditions \ie noisy condition and few-shot setting.

 \item Achieving the state-of-the-art results on both noisy image and image-set classification tasks.
\end{itemize}

The remainder of this article is organized as follows.
Section~\ref{section2} summarizes the related work on image or image-set classification problems, including manifold learning, deep learning and sparse representation.
Section~\ref{section3} introduces two kinds of dictionary learning methods based on Gaussian-based statistical information and patch-based spatial information respectively, the proposed J3S model and the classification module.
Section~\ref{section4} describes the solution of the proposed J3S model based on alternation minimization and analyzes the time and space complexity as well as a simple strategy for model acceleration effectively.
Section~\ref{section5} demonstrates the performance of the proposed J3S model for image and image-set tasks over several standard databases under different conditions such as noise and few-shot.
Section~\ref{section6} concludes with proposals for future work.
The preliminary work has appeared in~\cite{cheng2020joint}.
\footnote{Significant changes have been made compared to our previous work in~\cite{cheng2020joint}. First, we \bihan{improve the J3S method by reducing the model complexity with a simple and efficient approach}.
Second, we add more description and \bihan{analysis on the dictionary learning and classification}.
Third, we include new experiments on ablation study to investigate the model convergence and parameter selection.
Furthermore, we conduct an extra experiment on an object recognition task based on the ETH-80 database to evaluate the generalizability of the proposed J3S model in different scenarios.
Finally, we conduct some additional popular settings, \eg noisy condition and few-shot setting are conducted to validate the performance and robustness of the proposed J3S model.}

\section{Related Work} \label{section2}

Image-set classification aims to identify the common class of a multi-image query. The inherent properties of each query set can be modeled effectively by popular methods such as manifold learning, deep learning, sparse coding, etc.

\textbf{Manifold Learning}: The classic methods based on Discriminant Canonical Correlations (DCC) \cite{kim2007discriminative} proposed to classify image sets by maximizing the canonical correlations of within-class sets and minimizing the canonical correlations of between-class sets.
Later on, more subspace methods \cite{cevikalp2010face} were proposed to simplify the geometric structure learning for image sets.
However, these approaches are limited as most image sets lie on a Riemannian manifold rather than Euclidean subspaces~\cite{wang2012covariance,huang2015log}, e.g., symmetric positive definite (SPD) manifold is widely used to represent image sets.
To ease the computation, the Log-Euclidean Riemannian Metric (LERM) framework \cite{arsigny2007geometric} proposed to map data from SPD manifold to its tangent Euclidean space.
Besides, Log-Euclidean Manifold Learning (LEML) \cite{huang2015log} projects the original SPD manifold to a lower-dimension discriminative SPD manifold while preserving its original geometry.
More recently, Riemannian Manifold Metric Learning (RMML) \cite{zhu2018towards} proposed a more generalized metric learning method which can be applied to multiple manifolds.
From a statistical perspective, when modeling image sets or the multi-channel features via Gaussian distribution, their covariance matrices for a collection of Gaussian can form a Riemannian manifold of SPD matrices~\cite{wang2012covariance,wang2016raid,wang2017g}.
Covariance Discriminative Learning (CDL) \cite{wang2012covariance} derived a Riemannian kernel function to map covariance matrix from manifold space to Hilbert space, where kernelized linear methods can be used for learning.

\textbf{Deep Learning}: Recently, more works on deep learning have shown its capability for image-set classification~\cite{hayat2014deep,Lu_2015_CVPR,sun2017learning}.
Deep Reconstruction Model (DRM) \cite{hayat2014deep} learns a template deep reconstruction model using neural networks and then uses the minimal reconstruction residual to classify a query set. 
Multi-manifold deep learning (MMDML) \cite{Lu_2015_CVPR} maps multiple sets of image into a shared feature subspace to leverage the nonlinear information.
More recently, Deep Match Kernels (DMK) \cite{sun2017learning} is proposed for image-set classification without considering specific assumptions on image distribution and geometrical structures and builds local match kernels to leverage its generic deep features.

\begin{table}[!t]
\centering
\fontsize{7.5}{14pt}\selectfont
\begin{tabular}{|c|c|c|c|c|}
\hline  
\textit{Methods} & \tabincell{c}{Feature \\ Learning} & \tabincell{c}{Model \\based} & \tabincell{c}{Manifold\\ Space} & \tabincell{c}{Small-Scale\\ Training}\\
\hline 
DCC \cite{kim2007discriminative} &   & \checkmark  &  & \checkmark \\
\hline 
AHISD/CHISD \cite{cevikalp2010face} & \checkmark & \checkmark &  & \checkmark  \\
\hline
LEML \cite{huang2015log}&  & \checkmark & \checkmark & \checkmark   \\
\hline 
RMML \cite{zhu2018towards}&  & \checkmark & \checkmark & \checkmark  \\
\hline
CDL \cite{wang2012covariance}&  & \checkmark & \checkmark & \checkmark  \\
\hline
RSR \cite{harandi2012sparse}& \checkmark & & \checkmark & \checkmark\\
\hline
KGDL \cite{harandi2013dictionary} & \checkmark &  & \checkmark & \checkmark\\
\hline
DRM \cite{hayat2014deep}& \checkmark & \checkmark &  &   \\
\hline
MMDML \cite{Lu_2015_CVPR}& \checkmark & \checkmark & \checkmark &   \\
\hline
DMK \cite{sun2017learning}& \checkmark & \checkmark &  &   \\
\hline
Proposed & \multirow{2}{*}{\checkmark} & \multirow{2}{*}{\checkmark} & \multirow{2}{*}{\checkmark} & \multirow{2}{*}{\checkmark}  \\
J3S & &  &  &  \\
\hline
\end{tabular}
\caption{Comparison of the key attributes between the proposed J3S method, and other image-set classification algorithms.}
\label{related}
\end{table}

\textbf{Sparse Representation}:
Sparse coding based classification represents a query sample on a dictionary composed of the training samples of all classes, and then classified by the reconstruction error of each class \cite{wright2009robust,kang2011feature,wen2015structured,wang2020hardness}.
Besides, the sparse coefficients can be used as the extracted features for classification, e.g., linear spatial pyramid matching \cite{yang2009linear}.
Most existing works focus on sparse coding and dictionary learning on the zero-order information, \ie the original feature space, while the first-order and second-order statistics contain global information and take the correlation of the data into account.
They can be more robust to variations in images and videos applications, e.g., variations of poses, illumination and occlusions.

Table~\ref{related} summarized the aforementioned related methods, as well as the proposed J3S method for image-set classification. 
Furthermore, some recent works also proposed sparse coding and dictionary learning models on Riemannian manifold of SPD matrices and Grassmann manifold:
Sparse coding on Riemannian manifold can be converted to a kernel sparse coding problem by deriving valid kernels for SPD manifold \cite{harandi2012sparse,cherian2016riemannian} or Grassmann manifold \cite{harandi2013dictionary}. 
However, none of the existing works combined statistical with spatial priors in the sparse representation.
Besides, the robustness of the image-set classification has been rarely investigated.

\section{Dictionary Construction and Joint Sparse Representation} \label{section3}

In this section, we present the J3S model for classification tasks, including the dictionary construction of statistical and spatial models and joint sparse coding.
Our proposed J3S model can deal well with different types of input data such as single image and image set.

To obtain the unified feature representations for classifying both a single image and an image data set, we apply the corresponding data preprocessing methods.
Specifically, for an image set ${\bf{M}}_i$ with feature of each image $\left\{x_1,x_2,...,x_n\right\}, x_j \in \mathbb{R}^{d}$, we combine them to construct the image set representation ${\bf{X}}_i$ directly; For a single image ${\bf{N}}_i$, we employ its deep feature representation $f({\bf{N}}_i) \in \mathbb{R}^{w \times h\times c}$ using a pre-trained CNN extractor as local features to construct ${\bf{X}}_i \in \mathbb{R}^{d \times c}$ where $d = w \times h$.
Thus, both an image or image set can be represented in a similar form as $\mathbf{X}_{i}=\left\{ \boldsymbol{x}_1, \boldsymbol{x}_2, ..., \boldsymbol{x}_{m_i} \right\} \in \mathbb{R}^{d \times m_i}$, where $d$ is the feature dimension of each image and $m_i$ is the number of image in each image set or the number of channel for a single image.

\subsection{Statistical Dictionary Construction} \label{sec31}

Based on the Gaussian statistical model, we need to compute the mean vector $\boldsymbol{\mu_i}$ and covariance matrix ${\mathbf{C}}_i$ in Reproducing Kernel Hilbert Space (RKHS) for the corresponding Gaussian descriptor ${G_i}\left( {\boldsymbol{\mu_i}},{\mathbf{C}}_i \right), i=1,2,...,N$.
We map $\mathbf{X}_i$ into an RKHS by the mapping function $\eta(\cdot)$ with Hellinger's kernel, the mean vector $\boldsymbol{\mu_i}$ and covariance matrix ${\mathbf{C}}_i$ can be computed as:
\begin{equation}
\boldsymbol{\mu_i} =\frac{1}{m_i} \sum_{k=1}^{m_i} \eta\left(\boldsymbol{x}_k\right), {\mathbf{C}_i}=\frac{1}{m_i} \Phi(\mathbf{X}_i) \mathbf{J} \Phi(\mathbf{X}_i)^{T}.
\label{cov}
\end{equation}
Here $\Phi(\mathbf{X}_i)=[\eta(\boldsymbol{x}_1),...,\eta(\boldsymbol{x}_{m_i})]$, and $\mathbf{J}=\mathbf{I}_{d}-\frac{1}{d}\mathbf{1}_{d}\mathbf{1}_{d}^{T}$ is the centering matrix.
However, when the dimension of the original features (\ie $d$) is very high, and the number of samples (\ie $m_i$) is small, such a Gaussian descriptor ${G_i}\left( {\boldsymbol{\mu_i}},{\mathbf{C}}_i \right)$ can not work well.
To solve this problem, following \cite{wang2016raid}, we estimate the robust covariance matrix $\widehat{\mathbf{\Sigma}}_i$ by solving a regularized maximum likelihood estimation problem as:
\begin{equation}
\min _{\widehat{\Sigma}_i} \log |\widehat{\mathbf{\Sigma}}_i|+\operatorname{tr}\left(\widehat{\mathbf{\Sigma}}_i^{-1} \widehat{\mathbf{C}}_i\right)+\alpha D_{\mathrm{vN}}(\mathbf{I}_{d}, \widehat{\mathbf{\Sigma}}_i),
\label{vnMLE}
\end{equation}
where $D_{\mathrm{vN}}$ is the von Neumann matrix divergence~\cite{kulis2009low} of two matrices and $\alpha \in (0,1)$ is a regularizing parameter.
The optimal solution of problem (\ref{vnMLE}) can be computed as:
\begin{equation}
\begin{aligned}
\widehat{\mathbf{\Sigma}}_i &=\widehat{\mathbf{U}}_i \operatorname{diag}\left(\lambda_{k}\right) \widehat{\mathbf{U}}_i^{T}, \\
\lambda_{k} &=\sqrt{\left(\frac{1-\alpha}{2 \alpha}\right)^{2}+\frac{\delta_{k}}{\alpha}}-\frac{1-\alpha}{2 \alpha}.
\end{aligned}
\end{equation}
Here $\delta_{k}$ is the diagonal matrix of the singular values in decreasing order, and $\widehat{\mathbf{U}}_i$ is the orthogonal matrix consisting of the eigenvectors corresponding to the singular values. $\widehat{\mathbf{U}}_i$ and $\delta_{k}$ are computed by the singular value decomposition (SVD) of the covariance matrix $\mathbf{C}_i$ as $svd(\mathbf{C}_i)=\widehat{\mathbf{U}}_i \operatorname{diag}\left(\delta_{k}\right) \widehat{\mathbf{U}}_i^{T}$.

By using the mean vector $\mu_i$ and robust covariance matrix $\widehat{\mathbf{\Sigma}}_i$, we can define the embedding symmetric positive definite matrix ${\bf{P}}_i$ as:
\begin{equation}
{\bf{P}}_i=\left[\begin{array}{cc}
{\widehat{\mathbf{\Sigma}}_i+\beta^{2} \boldsymbol{\mu_i} \boldsymbol{\mu_i}^{T}} & {\beta \boldsymbol{\mu_i}} \\
{\beta \boldsymbol{\mu_i}^{T}} & {1}
\end{array}\right] \in \mathbb{R}^{(d+1) \times (d+1)},
\label{pp}
\end{equation}
where $\beta >0$ is a parameter to balance the orders of magnitude between them.

\begin{figure*}[t]
\centering
\includegraphics[width=0.9\textwidth]{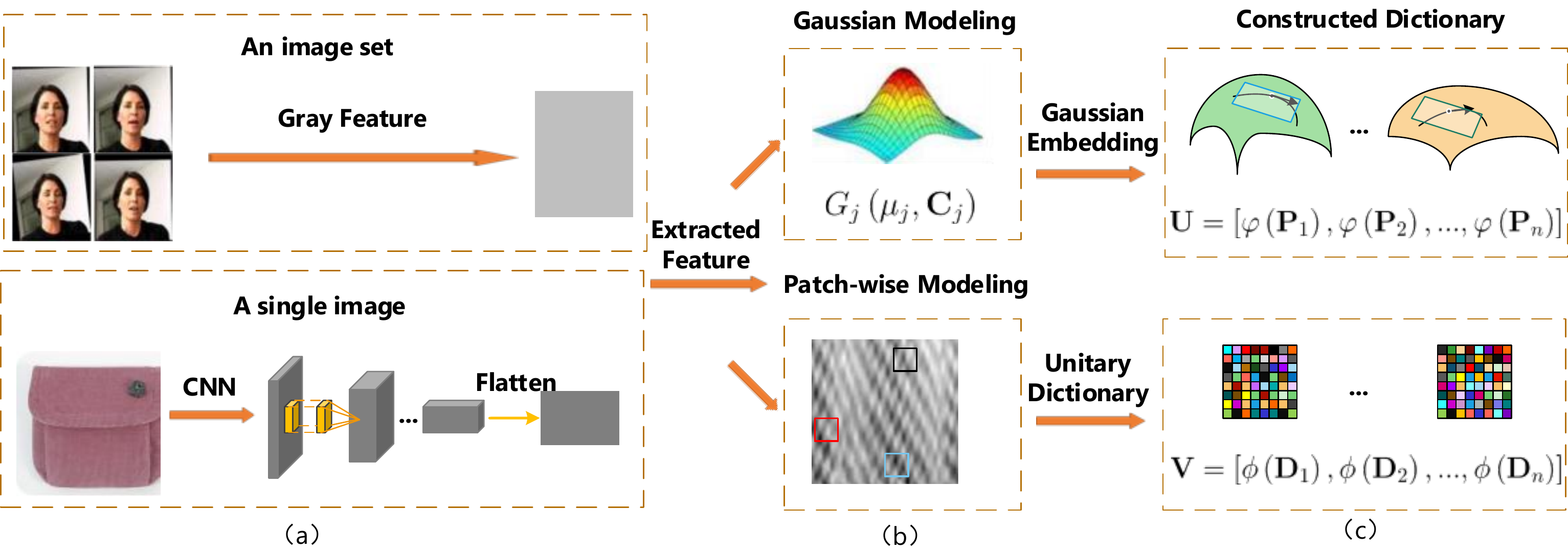}
\caption{The framework of Joint Statistical and Spatial Sparse representation of image and image set classification.}
\label{review}
\end{figure*}
\subsection{Spatial Dictionary Construction} \label{sec32}

Similarly, given a sample $\mathbf{X}_i$, considering spatial information, we can learn a patch-based unitary dictionary ${\mathbf{D}}_i$ from the feature map of a single image or gray feature of an image set.
For a single image, we choose the original image or the deep feature as the input to learn a patch-based unitary dictionary.
In contrast, for an image set, we combine each single image feature to construct a patch-based unitary dictionary to exploit within the class structure.
For image and image set classification, the objective is to learn a unitary dictionary $\mathbf{D}_i\in \mathbb{R}^{p \times p}$ based on 2D image patches constructed from sample $\mathbf{X}_i$ by solving the following problem with the synthesis model as:
\begin{equation}
\{\hat{\mathbf{D}}_i, \hat{\gamma_k}\}=\underset{\mathbf{D}_i, \{\gamma_k\}}{\operatorname{argmin}} \sum_{k=1}^{N}\left\|\mathbf{C}_k \boldsymbol{u}_i-\mathbf{D}_i \gamma_{k}\right\|_{2}^{2}, \text { s.t. } \mathbf{D}_i^{H} \mathbf{D}_i=\mathbf{I}_{p},
\label{unid}
\end{equation}
where $\mathbf{C}_k \boldsymbol{u}_i$ is used for extracting patches from $\mathbf{X}_i$, $\boldsymbol{u}_i$ is the vectorized form of $\mathbf{X}_i$, $N$ is the number of total patches and $\mathbf{I}_{p}$ is the identity matrix.


Sparse coding problems under the synthesis model are NP-hard in general, and even the approximate algorithms are typically expensive~\cite{pati1993orthogonal}.
However, since problem (\ref{unid}) learns the unitary dictionary, it is equivalent to the unitary transform learning problem~\cite{wen2020set}, \ie a signal $u$ is approximately sparsifiable using a learned unitary transform $\mathbf{W}_{i} \in \mathbb{R}^{m \times p}$, as $\mathbf{W}_{i}u=\gamma + \mathit{e}$, where $\gamma \in \mathbb{R}^p$ is sparse and $\mathit{e}$ is a small residual in the transform domain.
The corresponding transform learning problem is formulated as
\begin{equation}
\hat{\mathbf{W}}_i=\underset{\mathbf{W}_i}{\operatorname{argmin}} \sum_{k=1}^{N}\left\|\mathbf{W}_i \mathbf{C}_k \boldsymbol{u}_i-\gamma_{k}\right\|_{2}^{2}. \text { s.t. } \mathbf{W}_i^{H} \mathbf{W}_i=\mathbf{I}_{p}.
\label{uni}
\end{equation}
Based on~\cite{wen2020set}, the two sparsity models can be unified under the \textit{unitary dictionary assumption}, \ie $\mathbf{D}_i = \mathbf{W}_i^{T}$, and $\mathbf{D}_i^{T} \mathbf{D}_i = \mathbf{I}_{p}$. 

\begin{proposition}
Under the \textit{unitary dictionary assumption}, the problems (\ref{unid}) and (\ref{uni}) are equivalent.
\end{proposition}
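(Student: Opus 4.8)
The plan is to exploit the fact that, under the \textit{unitary dictionary assumption}, the map $\mathbf{W}_i \mapsto \mathbf{W}_i^{T} = \mathbf{D}_i$ is a bijection between the feasible set of (\ref{uni}) and that of (\ref{unid}), and that left-multiplication by a square unitary matrix is an $\ell_2$-isometry, so the two objective functions agree term by term on corresponding iterates. The argument is therefore essentially a one-line isometry computation together with a constraint-set bookkeeping step.

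First I would record the elementary linear-algebra fact that a square matrix $\mathbf{D}_i \in \mathbb{R}^{p\times p}$ with $\mathbf{D}_i^{T}\mathbf{D}_i = \mathbf{I}_p$ also satisfies $\mathbf{D}_i \mathbf{D}_i^{T} = \mathbf{I}_p$; in particular $\mathbf{W}_i := \mathbf{D}_i^{T}$ is itself unitary, equals the inverse of $\mathbf{D}_i$, and ranges over exactly the feasible set $\{\mathbf{W}_i : \mathbf{W}_i^{T}\mathbf{W}_i = \mathbf{I}_p\}$ of (\ref{uni}) as $\mathbf{D}_i$ ranges over the feasible set of (\ref{unid}). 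This settles the constraint part of the equivalence and also explains why the dimension $m$ in $\mathbf{W}_i\in\mathbb{R}^{m\times p}$ must here equal $p$.

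Next, for a fixed feasible pair $(\mathbf{D}_i,\{\gamma_k\})$ with $\mathbf{W}_i=\mathbf{D}_i^{T}$, I would apply the isometry to the $k$-th residual vector $\mathbf{z}_k := \mathbf{C}_k\boldsymbol{u}_i - \mathbf{D}_i\gamma_k$: since $\mathbf{W}_i$ is unitary, $\|\mathbf{z}_k\|_2^2 = \|\mathbf{W}_i\mathbf{z}_k\|_2^2$, and since $\mathbf{W}_i\mathbf{D}_i = \mathbf{I}_p$ we get $\mathbf{W}_i\mathbf{z}_k = \mathbf{W}_i\mathbf{C}_k\boldsymbol{u}_i - \gamma_k$, hence $\|\mathbf{C}_k\boldsymbol{u}_i - \mathbf{D}_i\gamma_k\|_2^2 = \|\mathbf{W}_i\mathbf{C}_k\boldsymbol{u}_i - \gamma_k\|_2^2$. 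Summing over $k=1,\dots,N$ shows the two objectives take identical values on corresponding feasible points; since any sparsity requirement imposed on the $\{\gamma_k\}$ is carried verbatim into both formulations, the feasible regions match in full. Therefore $(\hat{\mathbf{D}}_i,\{\hat{\gamma}_k\})$ is optimal for (\ref{unid}) if and only if $(\hat{\mathbf{D}}_i^{T},\{\hat{\gamma}_k\})$ is optimal for (\ref{uni}), with equal optimal values.

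I do not expect a genuine obstacle here; the only two points needing care are (i) invoking squareness so that $\mathbf{D}_i^{T}\mathbf{D}_i=\mathbf{I}_p$ actually yields $\mathbf{D}_i\mathbf{D}_i^{T}=\mathbf{I}_p$ — otherwise $\mathbf{W}_i$ would be only a left inverse and the norm identity could fail off the column space of $\mathbf{D}_i$ — and (ii) being explicit that the sparse codes $\{\gamma_k\}$ are the same optimization variables in both problems, so that the claim is about the joint minimization over $(\mathbf{D}_i,\{\gamma_k\})$ versus $(\mathbf{W}_i,\{\gamma_k\})$ rather than about the dictionary alone. With these noted, the equivalence follows immediately from the isometry identity above.
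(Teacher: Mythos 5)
Your proposal is correct and follows essentially the same route as the paper's proof: both use the inverse relation $\mathbf{W}_i\mathbf{D}_i=\mathbf{I}_p$ together with the $\ell_2$-isometry of the unitary matrix to show the two objectives agree term by term, hence the problems are equivalent with $\hat{\mathbf{D}}_i=\hat{\mathbf{W}}_i^{T}$. Your additional remarks on squareness (so that $\mathbf{D}_i^{T}\mathbf{D}_i=\mathbf{I}_p$ implies $\mathbf{D}_i\mathbf{D}_i^{T}=\mathbf{I}_p$) and the bijection of feasible sets make explicit details the paper leaves implicit, but do not change the argument.
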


\begin{proof}
Based on the \textit{unitary dictionary assumption}, we have $\mathbf{W}_i \mathbf{D}_i = \mathbf{I}_{p}$ and $\|\mathbf{W}_i \tau\|_{2}=\|\tau\|_{2}, \forall \tau .$
Thus, the function in problem (\ref{unid}) is identical to that in problem (\ref{uni}), \ie $\left\|\mathbf{C}_{k} \boldsymbol{u}_{i}-\mathbf{D}_{i} \gamma_{k}\right\|_{2}^{2}=\left\|\mathbf{W}_{i} \mathbf{C}_{k} \boldsymbol{u}_{i}-\gamma_{k}\right\|_{2}^{2}$.
Therefore, the problems (\ref{unid}) and (\ref{uni}) become equivalent, and $\hat{\mathbf{D}}_i=\hat{\mathbf{W}}_i^{T}$.
\end{proof}

Thus, we can obtain the optimal dictionary $\mathbf{D}_i=\hat{\mathbf{W}}_i^{T}$ in (\ref{unid}) by solving its equivalent problem (\ref{uni}) which has an exact and closed-form solution~\cite{ravishankar2015sparsifying}, \ie 
$\hat{\mathbf{W}}_i=\mathbf{G} \mathbf{S}^H$ where $\mathbf{G}, \mathbf{S}$ are computed by the SVD of $\mathbf{K} \triangleq \sum_{k=1}^{N}\left(\mathbf{C}_{k} \boldsymbol{x}_i\right) \gamma_{k}^H$ as $svd(\mathbf{K})=\mathbf{S} \Sigma \mathbf{G}^H$.

Fig.~\ref{review} illustrates the framework of J3S, in which a sample ${\hat{\mathbf{X}}}_j$ (either an image set or a single image) can be modelled by a statistical model to obtain the embedding SPD matrix $\mathbf{P}_j$, and simultaneously modelled by a spatial patch-based model to generate a unitary transform dictionary $\mathbf{D}_j$, which are then used for joint sparse coding.

\begin{figure*}[!t]
\centering
\includegraphics[width=1.0\textwidth]{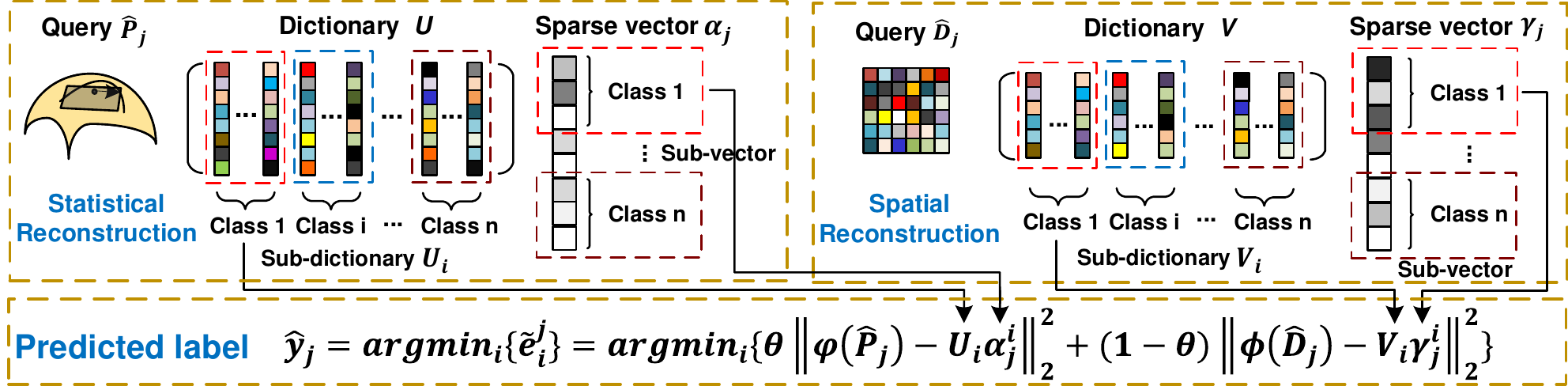}
\caption{The framework of classification module of Joint Statistical and Spatial Sparse representation model.}
\label{classification}
\end{figure*}

\subsection{J3S Sparse Coding} \label{sec33}

To reconcile the two types of dictionaries generated from statistical Gaussian modeling and spatial patch-wise unitary dictionary learning, we propose the joint statistical and spatial sparse representation (J3S) model, \ie for any query sample, we impose joint sparsity on their statistical and spatial dictionary-domain coefficients, to maintain the consistency and dependency of their individual sparse representation.
For simplicity, we use the following symbols to simplify the objective function,
\[
\begin{split}
\begin{aligned}
{\bf{U}_i} &= \left[ {\varphi \left( {{{\mathbf{P}}_{i1}}} \right),\varphi \left( {{{\bf{P}}_{i2}}} \right),...,\varphi \left( {{{\bf{P}}_{ik}}} \right)} \right],\\
{\bf{V}_i} &= \left[ {\phi \left( {{{\mathbf{D}}_{i1}}} \right),\phi \left( {{{\mathbf{D}}_{i2}}} \right),...,\phi \left( {{{\mathbf{D}}_{ik}}} \right)} \right]. 
\end{aligned}
\end{split}
\]
Here, $\varphi \left( {{{\bf{P}}_{{i_1...k}}}} \right)$ and $\phi \left( {{{\bf{D}}_{{i_1...k}}}} \right)$ are the mapping function of Gaussian and patch-based unitary dictionary corresponding to the $k$ training samples belonging to the $i$-th class, respectively.

For statistical Gaussian-based dictionary, each embedding matrix $\mathbf{P}_i$ is an SPD matrix which can be viewed as a point on the corresponding SPD manifold based on Eq (\ref{pp}).
Direct vectorization of the SPD matrix to generate a dictionary will destroy the intrinsic structure, which may cause information loss.
To avoid such loss while measuring the similarity between two matrices on the SPD manifold, we use a general framework called LERM~\cite{arsigny2007geometric} to map each of them to its tangent space through matrix logarithm operation $log(\cdot)$.
By using such embedding, we directly measure the similarity on the tangent space using the Euclidean distance.
In this way, we can get the vectorized form of each matrix as the statistical Gaussian-based feature for feature representation.
To simplify the calculation of SPD matrix $log(\mathbf{P}_i)$, here we only extract upper triangle elements to construct the dictionary, and thus the mapping function can be written as:
\begin{equation}
   \varphi (\mathbf{P})= triu(log(\mathbf{P})).
\label{dic-1}
\end{equation}
For patch-based unitary model, $\phi(\cdot)$ is the vectorized mapping function of a unitary dictionary. 

Given the joint statistical and spatial dictionaries, we compute the coefficient vectors $\bm{\alpha}_j$ and $\bm{\gamma}_j$ of the $j$-th query sample with feature representations $\varphi ({\hat{\mathbf{P}}}_j)$ and $\phi({\hat{\mathbf{D}}}_j)$ by solving the following problem:
\begin{equation}\label{CSRGC}
\begin{aligned}
 \underset{\{\bm{\alpha}_j, \bm{\gamma}_j\}}{\operatorname{min}}\;\theta\left\| \varphi \left( {\hat{\mathbf{P}}}_j \right)- [ {\bf{U}_1,\bf{U}_i,...,\bf{U}_n} ]\bm{\alpha}_j \right\|_2^2 \\ 
 +(1-\theta)\left\| \phi \left( {\hat{\mathbf{D}}}_j \right) - [{\bf{V}_1,\bf{V}_i,...,\bf{V}_n}]\bm{\gamma}_j \right\|_2^2 \\
 +{\lambda _1}\left\| \bm{\alpha}_j  \right\|_2^2 + {\lambda _2}\left\| \bm{\gamma}_j  \right\|_2^2 + {\lambda _3}{\left\| {\left[ \bm{\alpha}_j,\bm{\gamma}_j \right]} \right\|_{2,1}},
\end{aligned}
\end{equation}
where $\theta\in(0,1)$ is the weighting parameter defined to balance the scale of statistical model and patch-based model, and $\varphi ( \cdot ),\phi \left(  \cdot  \right)$ are used to map two kinds of representations, respectively.
${\left\| {\left[ \bm{\alpha}_j,\bm{\gamma}_j \right]} \right\|_{2,1}}$ is the $\ell_{2,1} \textnormal{-}$norm for row sparse.

\temp{
By solving the optimization problem in (\ref{CSRGC}), we can get the representation coefficient vectors $\bm{\alpha}_j$ and $\bm{\gamma}_j$ that correspond to the Gaussian and patch-based dictionary models, respectively.
With two coefficient vectors $\bm{\alpha}_j$ and $\bm{\gamma}_j$, we can compute the reconstruction loss of the $j$-th query sample using only one particular sub-dictionaries and corresponded coefficient sub-vectors of the $i$-th class as:
\begin{equation}
\begin{aligned}
{e}_i^{j} = \theta \left\| \varphi \left( {\hat{\mathbf{P}}}_j \right) - \Tilde{\mathbf{P}}_j^{i} \right\|_2^2 + (1- \theta) \left\| \phi \left( {\hat{\mathbf{D}}}_j \right) - \Tilde{\mathbf{D}}_j^{i} \right\|_2^2 \\
+{\lambda _1}\left\| \bm{\alpha}_{j}^{i}  \right\|_2^2 + {\lambda _2}\left\| \bm{\gamma}_{j}^{i}  \right\|_2^2 + {\lambda _3}{\left\| {\left[ \bm{\alpha}_{j}^{i},\bm{\gamma}_{j}^{i} \right]} \right\|_{2,1}}.
\label{cls_m}
\end{aligned}
\end{equation}

Here $\bm{\alpha}_{j}^{i}$ and $\bm{\gamma}_{j}^{i}$ are particular coefficient sub-vectors of the $i$-th class for $J$-th query sample.
$\Tilde{\mathbf{P}}_j^{i} = \bf{U}_i \bm{\alpha_{j}^{i}}$ and $\Tilde{\mathbf{D}}_j^{i} = \bf{V}_i \bm{\gamma_{j}^{i}}$ are reconstructed statistical and spatial representations of the $j$-th query sample, respectively. 
$\bf{U}_i$ and $\bf{V}_i$ are two sub-dictionaries of training samples from the $i$-th class. 
$\bm{\alpha_{j}^{i}}$ and $\bm{\gamma_{j}^{i}}$ are the coefficient sub-vectors containing the sparse code corresponding to the training samples from the $i$-th class.
Moreover, (\ref{CSRGC}) and (\ref{cls_m}) share similar regularized terms to constrain the overall sparsity for sparse coding.
While for classification, we only keep the reconstruction loss term, ignoring the influence of the regularized term on it, thus the reconstruction loss can be rewritten as:
\begin{equation}
\Tilde{e}_{i}^{j} = \theta \left\| \varphi \left( {\hat{\mathbf{P}}}_j \right) - \Tilde{\mathbf{P}}_j^{i} \right\|_2^2 + (1- \theta) \left\| \phi \left( {\hat{\mathbf{D}}}_j \right) - \Tilde{\mathbf{D}}_j^{i} \right\|_2^2.
\label{cls}
\end{equation}



For a visual classification task, the most commonly used algorithm is Nearest Neighbor (NN), aiming to find the closest labeled sample to the current query sample according to the pre-defined metric methods and classify the query sample into the category corresponding to the closest sample.
Inspired by the idea of NN, we assume that features from the same class should be easier to reconstruct, since their feature representations contain similar embeddings, while features from different classes will be more difficult and produce larger reconstruction errors.
For $j$-th query sample, we utilize the reconstruction loss defined in (\ref{cls}) as the measurement for classification and measure similarity in terms of the overall representation of the whole category.
\begin{equation}
   \hat{y}_j = \underset{i}{\operatorname{argmin}}\left\{{\Tilde{e}_{i}^{j}}\right\}, \;\; for \;\; i=1,2,...,N.
\label{classify}
\end{equation}
Here $e_i$ is the reconstruction error of the $i$-th class computed by (\ref{cls}) and $\hat{y}_j$ is the predicted label of $j$-th query sample.

Fig.~\ref{classification} shows how the classification module works. 
Specifically, to classify the query sample, for each class $i$, we only use labeled samples of the corresponding category $i$ for joint sparse representation to reconstruct the query sample.
The query data $({\hat{\mathbf{P}}}_j,{\hat{\mathbf{D}}}_j)$ can then be classified according to the weighted reconstruction error of each class.
}

\section{Algorithm} \label{section4}
We propose a joint sparse representation model for image and image-set classification tasks.
It is obvious that each sub-problem of (\ref{CSRGC}) is convex.
Thus we use alternation minimization to solve the optimization problem.

{{\bf{Update}} $\bm{\alpha}_j,\bm{\gamma}_j$:}
The partial derivatives of the objective function with respect to the $\bm{\alpha}_j,\bm{\gamma}_j$ will be set to 0.

\begin{equation}
\begin{aligned}
\frac{{\partial f}}{\partial \bm{\alpha}_j} &= \theta\left( - 2{{\bf{U}}^T}\varphi \left( {\hat{\mathbf{P}}}_j \right) + 2{{\bf{U}}^T}{\bf{U}}\bm{\alpha}_j \right) \\ &\quad\quad\quad+ 2{\lambda _1}{\bf{I}}\bm{\alpha}_j + 2{\lambda _3}{{\bf{G}}_j}\bm{\alpha}_j = 0.
\end{aligned}
\end{equation}
\begin{equation}
\begin{aligned}
\frac{{\partial f}}{\partial \bm{\gamma}_j} &= (1-\theta)\left( - 2{{\bf{V}}^T}\phi \left( {\hat{\mathbf{D}}}_j \right) + 2{{\bf{V}}^T}{\bf{V}}\bm{\gamma}_j \right) \\ &\quad\quad\quad+ 2{\lambda _2}{\bf{I}}\bm{\gamma}_j + 2{\lambda _3}{{\bf{G}}_j}\bm{\gamma}_j = 0.
\end{aligned}
\end{equation}
where ${\bf{G}}_j$ is a diagonal matrix with the $k$-th diagonal element as $\frac{1}{{2{{\left\| {\left[ {\bm{\alpha}_j}_k,{\bm{\gamma}_j}_k \right]} \right\|}_2}}}$.

Thus we can get the iteration of $\bm{\alpha}_j,\bm{\gamma}_j$ as:
\begin{equation}
    {\bm{\alpha}_j} = {\left( {{{\bf{U}}^T}{\bf{U}} + \frac{{{\lambda _1}}}{{\theta}}{\bf{I}} + \frac{{{\lambda _3}}}{{\theta}}{{\bf{G}}_j}} \right)^{ - 1}}{{\bf{U}}^T}\varphi \left( {\hat{\mathbf{P}}}_j \right).
\label{solutionau}
\end{equation}
\begin{equation}
    {\bm{\gamma}_j} = {\left( {{{\bf{V}}^T}{\bf{V}} + \frac{{{\lambda _2}}}{{1-\theta}}{\bf{I}} + \frac{{{\lambda _3}}}{{1-\theta}}{{\bf{G}}_j}} \right)^{ - 1}}{{\bf{V}}^T}\phi \left( {\hat{\mathbf{D}}}_j \right).
\label{solutionac}
\end{equation}

{{\bf{Update}} ${\mathbf{G}}_j$:}
${\mathbf{G}}_j$ can be updated as follows:
   \begin{equation}
    {{{\mathbf{G}}_j}_{kk}} = \frac{1}{{2{{\left\| {\left[ {\bm{\alpha}_j}_k,{\bm{\gamma}_j}_k \right]} \right\|}_2} + eps}}.
    \label{solutiong}
   \end{equation}
where $eps = 1e\textnormal{-}16$ is an offset to prevent the unsolvable problem of Eq (\ref{solutiong}) in this paper.

\subsection{Complexity Analysis} \label{sec41}
\temp{
We discuss the time and space complexity of our proposed J3S model.
Compared with the steps of the dictionary construction and sparse coding, which require to construct the dictionaries and learn sparse code jointly, the classification part is calculated only based on the results obtained in (\ref{CSRGC}), so we do not consider the impact of the classification part on the complexity of the algorithm here.
As the sub-problems of (\ref{CSRGC}) are all convex and the objective functions are all lower-bounded, the optimization algorithm can converge to a local minimum~\cite{niesen2009adaptive}.
The time complexity consists of the updating of $\bm{\alpha}_j$, $\bm{\gamma}_j$, and ${\mathbf{G}}_j$.
The computational complexity of $\bm{\alpha}_j$ is $O(N^2d_1+N^3)$, and the computational complexity of $\bm{\gamma}_j$ is $O(N^2d_2+N^3)$.
Hence, the main time complexity of the proposed algorithm is $O(s(N^2d_m+N^3))$, where $s$ is the iteration number, $N$ is the number of training samples, $d_1=d(d+1)/2$ and $d_2=p \times p$ are dimensions of two dictionaries, respectively. 
$d$ is the number of channels or samples and $w$ is the patch size, and $d_m$ is the larger feature dimension of two dictionaries.

For space complexity, the proposed J3S model needs to save two dictionaries $\varphi(\mathbf{P})$ and $\phi(\mathbf{D})$ for all samples, a pair of sparse vectors $\alpha_j, \gamma_j$, and the corresponding matrix $\mathbf{G}$ for each query sample. 
The dimensions of statistical dictionary $\varphi(\mathbf{P})$ and unitary dictionary $\phi(\mathbf{D})$ are equal to $d_1$ and $d_2$, respectively.
}

\subsection{A simple and effective strategy for Model Acceleration} \label{sec42}
\temp{
The time and space complexity of the J3S model depends on the dimension of two dictionaries. 
Referring to the process of dictionary construction introduced earlier, we use the lower triangle form to store matrix information of $\varphi(\mathbf{P})$ for dimensionality reduction.
However, the dimension of dictionary is still too high when using deep feature with $d=512$ while the number of training samples $N$ is only hundreds.
In this way, $d_m \gg N$ and the time complexity is close to $O(s(N^2d_m))$, which is not conducive to the application of the proposed algorithm.

To reduce time and space cost, we use the commonly used principal component analysis (PCA) method~\cite{martinez2001pca} to perform dimensionality reduction operations on the two dictionaries to eliminate redundant information between different sizes.
For simplicity, we use $\mathbf{S}_1 \in \mathbb{R}^{N \times d_1}, \mathbf{S}_2 \in \mathbb{R}^{N \times d_2}$ to represent the dictionary of statistical Gaussian model and patch-based model generated from the whole dataset, respectively as
\[
\begin{split}
\begin{aligned}	
{\mathbf{S}_1} &= \left[ {\varphi \left( {{{\mathbf{P}}_1}} \right),\varphi \left( {{{\mathbf{P}}_2}} \right),...,\varphi \left( {{{\mathbf{P}}_N}} \right)} \right]^T,\\
{\mathbf{S}_2} &= \left[ {\phi \left( {{{\mathbf{D}}_1}} \right),\phi \left( {{{\mathbf{D}}_2}} \right),...,\phi \left( {{{\mathbf{D}}_N}} \right)} \right]^T.
\end{aligned}
\end{split}
\]

Specifically, we learn a principal component transformation $\mathit{f_{pca}}: \mathbb{R}^{N \times d_m} \rightarrow \mathbb{R}^{N \times (N-1)}$ to map the data from the original space to a new low-dimension space.
Using the transformation $\mathit{f_{pca}}$, we get new low-dimensional dictionaries $\hat{\mathbf{S}}_1$ and $\hat{\mathbf{S}}_2$ of the statistical and spatial models as:
\begin{equation}
{\hat{\mathbf{S}}_1} = \mathit{f_{pca}}(\mathbf{S}_1), \; {\hat{\mathbf{S}}_2} = \mathit{f_{pca}}(\mathbf{S}_2).
\end{equation}

After PCA operation, we store these two matrices for sparse representation learning. 
For each iteration, we select the columns corresponding to the training samples of the $i$-th class to form the dictionaries $\mathbf{U}_i$ and $\mathbf{V}_i$ to optimize (\ref{CSRGC}).
With PCA, we can reduce the dimensions of two dictionaries $d_1, d_2$ to the same level of $N$, which reduces the time and space cost, \ie the time complexity is reduced from $O(s(N^2d_m))$ to $O(s(N^3))$ with $d_m \gg N$.
The overall optimization procedure is formulated as Algorithm \ref{alg:frame}.


\begin{algorithm}[t]
\caption{Joint statistical and spatial sparse representation.}
\begin{algorithmic}[1]
\REQUIRE ~~\\
{    Training data $\left\{ {{{\bf{X}}_i},{{\bf{X}}_2},...,{{\bf{X}}_n}} \right\},{\bf{X}}_i \in \mathbb{R}^{d \times m_i}$.\\
       A query data ${\hat{\mathbf{X}}}_j$.}
\STATE Construct two dictionaries $\bf{S}_1$ and $\bf{S}_2$ of the whole dataset;
\STATE Adapt PCA to get the low dimension representation $\hat{\bf{S}}_1$ and $\hat{\bf{S}}_2$ of two dictionaries;
\STATE Initialize ${\bf{G}}_j$ as an identity matrix;
\REPEAT
    \STATE update $\bm{\alpha}_j$ according to Eq (\ref{solutionau});
    \STATE update $\bm{\gamma}_j$ according to Eq (\ref{solutionac});
    \STATE update ${\bf{G}}_j$  according to Eq (\ref{solutiong});
\
\UNTIL {convergence criterion satisfied}.
\STATE Classify the query data ${\hat{\mathbf{X}}}_j$  by (\ref{classify}).
\ENSURE ~~\\
The prediction label of the query data ${\hat{\mathbf{X}}}_j$.\\
\end{algorithmic}
\label{alg:frame}
\end{algorithm}
}

\section{Experiments} \label{section5}

We present experimental results on video-based face recognition, material classification, and object recognition tasks to demonstrate the effectiveness of the proposed J3S classification algorithm~\footnote{The reproducible implementations of the J3S algorithms will be made publicly available upon paper acceptance.}.
We conduct experiments on four databases: Flickr Material Database (FMD) \cite{Sharan2009Material}, UIUC Material Database \cite{uiuc}, ETH-80 \cite{eth80} and YouTube Celebrities \cite{ytc}.
FMD and UIUC databases are used for image-based classification, while ETH-80 and YTC databases are used for image set-based classification.
These databases contain samples in different materials, views, illuminations, and even different modalities.
Fig.~\ref{fig:data} shows some sample images with different categories from each database.

\begin{figure}[!t]
\centering
\includegraphics[trim=0.5cm 0cm 0cm 0cm,clip,width=1.0\linewidth]{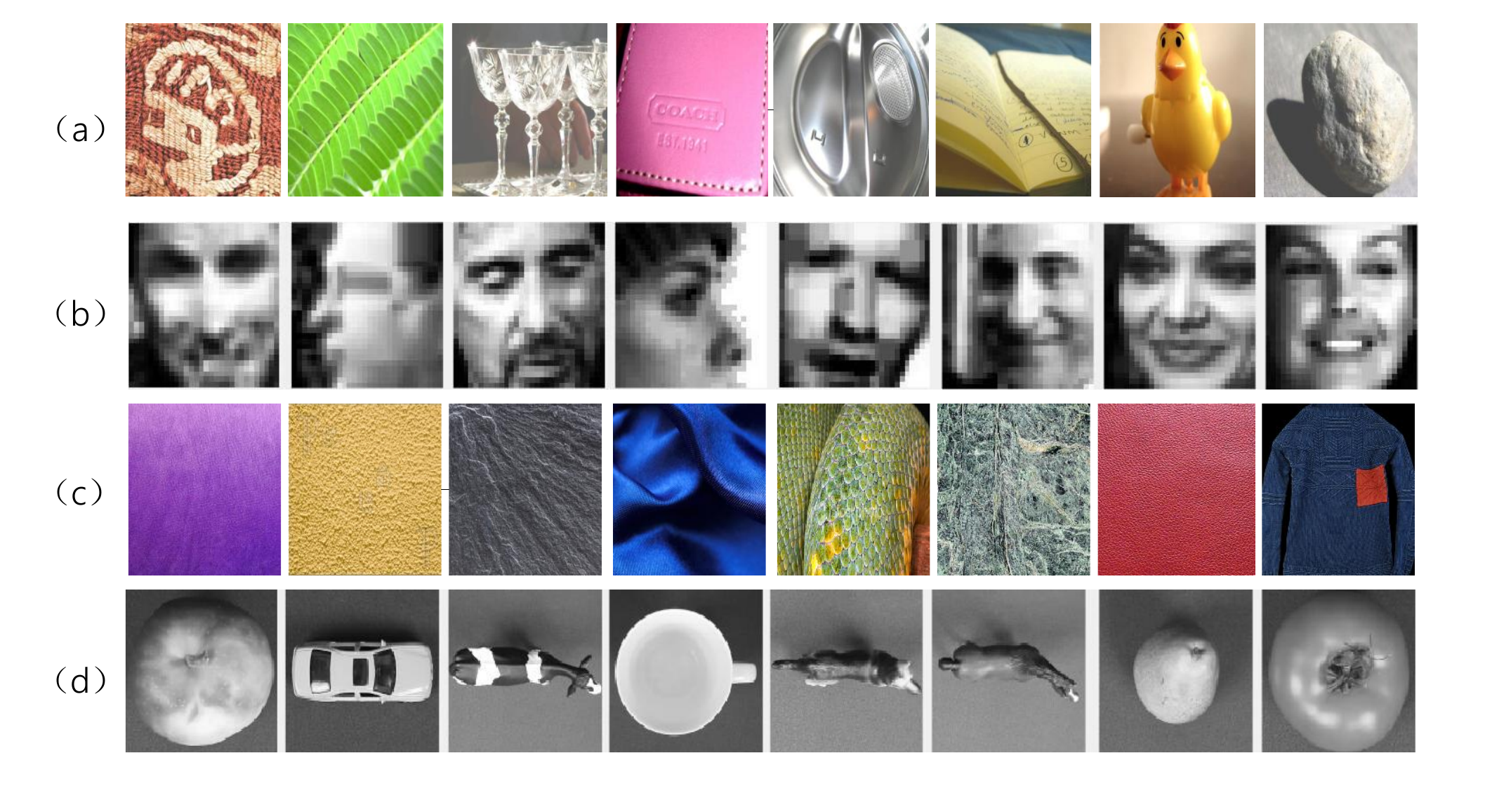}
\caption{Examples of four databases. (a) Flickr Material Database; (b) YouTube Celebrities Database; (c) UIUC Material Database; (d) ETH-80 Database.}
\label{fig:data}
\end{figure}

\emph{Flickr Material} database (FMD) has 10 materials categories of 1000 images in the wild \cite{Sharan2009Material}. 
Each image is selected from \emph{Flickr.com} with variations of illuminations, rotations, and scales. 
We filter images with the \emph{VGG-VD16} model pre-trained on the ImageNet database and employ the output of the last convolution layer as local features with the size of $m_i \times 512$. 
Following \cite{wang2016raid}, we randomly choose 50 images in each category for gallery and the other 50 for probes and repeat this experiment ten times.

\emph{UIUC Material} database contains 216 images of 18 material categories in the wild \cite{uiuc}. 
We obtain the deep features of UIUC material by taking the same measures on FMD. 
We randomly choose half images in each category for the gallery and the other half for probes.

\emph{ETH-80} database contains 80 image sets of 8 object categories \cite{eth80}.
Each category has 10 sub-object with 41 images of different views.
Following \cite{wang2012covariance}, we randomly choose 5 objects as the gallery and the other 5 as probes in each category. 
The size of each image is resized to $20 \times 20$, and the intensity feature is used.
Thus, each image set can be expressed by the matrix of $400 \times 41$.

\emph{YouTube Celebrities} (YTC) database contains 1910 video clips of 47 subjects \cite{ytc} with different numbers of frames in each video.
Following \cite{wang2012covariance,huang2015log}, we use histogram equalization to eliminate light effects in pre-processing step and randomly select 3 videos per subject for the gallery and 6 videos for probes. 
Then, each image is resized to a $20 \times 20$ image with the intensity feature.
Thus each video can be expressed by the matrix of $n_i \times 400$ where $n_i$ is the number of frames in each video.


\begin{table}[t]
\centering
\fontsize{8}{14pt}\selectfont
\begin{tabular}{|c||c|c|c|c|}
\hline
Methods & ETH-80 & FMD & UIUC & YTC \\ 
\hline
AHISD(linear) \cite{cevikalp2010face} & 72.50 & 46.72 & 55.37 & 64.65 \\
\hline
AHISD(non-linear) \cite{cevikalp2010face} & 72.00 & 46.72 & 55.37 & 66.58 \\
\hline
CHISD(linear) \cite{cevikalp2010face} & 79.75 & 47.52 & 65.09 & 67.24 \\
\hline
CHISD(non-linear) \cite{cevikalp2010face} & 72.50 & 63.90 & 65.65 & 68.09 \\
\hline
\hline
MMD \cite{Wang2008Manifold} & 85.75 & 60.60 & 62.78 & 69.60 \\
\hline
MDA \cite{Wang2009Manifold} & 87.75 & 62.50 & 67.13 & 64.72 \\
\hline
SPDML-AIRM \cite{harandi2014manifold} & 90.75 & 63.42 & 74.72 & 67.50 \\
\hline
SPDML-Stein \cite{harandi2014manifold} & 90.75 & 63.80 & 68.24 & 68.10 \\
\hline
LEML \cite{huang2015log} & 93.50 & 66.60 & 69.17 & 69.85 \\
\hline
RMML-SPD \cite{zhu2018towards} & 95.00 & 68.88 & 70.09 & 78.05 \\
\hline
RMML-GM \cite{zhu2018towards} & 93.00 & 69.62 & 76.48 & 69.15 \\
\hline
\hline
CDL-LDA \cite{wang2012covariance} & 94.00 & 76.92 & 78.89 & 70.21 \\
\hline
CDL-PLS \cite{wang2012covariance} & 94.00 & 75.36 & 76.39 & 69.94 \\
\hline
\hline
RSR \cite{harandi2012sparse} & 91.50 & 74.92 & 72.59 & 72.77\\
\hline
KGDL \cite{harandi2013dictionary} & 93.00  & 77.40 & 76.32 & 73.91\\
\hline
\hline
DRM \cite{hayat2014deep} & \bf{\textcolor{red}{98.12}} & N/A & N/A & 72.55\\
\hline
MMDML\cite{Lu_2015_CVPR} & 94.50 & N/A & N/A & 78.5\\
\hline
\hline
J3S w/o Spatial Dict. & 95.25 & \bf{\textcolor{blue}{81.40}} & \bf{\textcolor{blue}{83.43}} & \bf{\textcolor{blue}{82.87}} \\ 
\hline
J3S & \bf{\textcolor{blue}{96.00}} & \bf{\textcolor{red}{82.58}} & \bf{\textcolor{red}{84.07}} & \bf{\textcolor{red}{83.09}} \\
\hline
\end{tabular}
\vspace{0.15in}
\caption{{Classification accuracy (in $\%$) over the four selected databases: AHISD and CHISD are affine subspace based methods; MMD to RMML are nonlinear manifold based methods; CDL is a Gaussian distribution based method; RSR and KGDL are based on sparse coding; DRM and MMDML are deep methods. The best (second best resp.) results are highlighted as \textcolor{red}{Red}  (\textcolor{blue}{Blue} resp.).}}\label{result1}
\end{table}

\subsection{Competing methods}
\temp{
To illustrate the effectiveness of the proposed model, we compare our method with the following representatives of the subspace, non-linear manifold, statistical, sparse representation, and deep based methods.

 \begin{itemize}
   \item         Affine subspace based methods:
                 AHISD and CHISD \cite{cevikalp2010face}.
   \item         Nonlinear manifold based methods:
                 MMD\cite{Wang2008Manifold},
                 MDA \cite{Wang2009Manifold},
                 SPDML \cite{harandi2014manifold},
                 LEML \cite{huang2015log},
                 and RMML \cite{zhu2018towards}.
   \item         Gaussian distribution based methods:
                 CDL \cite{wang2012covariance}.
   \item         Sparse representation based methods:
                 RSR \cite{harandi2012sparse},
                 and KGDL \cite{harandi2013dictionary}.
   \item         Deep based methods:
                 DRM \cite{hayat2014deep},
                 and MMDML \cite{Lu_2015_CVPR}.
 \end{itemize}
}
\subsection{Parameter Setting}
We apply the implementations of competing methods provided by the authors with the default settings suggested by the corresponding papers.
For MMD, the PCA percentage is set to $90\% $.
For MDA, we set the number of local models, between-class NN local models, and the subspace dimension the same as \cite{Wang2009Manifold}.
For SPDML, we implement both SPDML-AIRM and SPDML-Stein versions. In both versions, following \cite{harandi2014manifold}, ${v_w}$ is set as the minimum of the samples in one class. The new dimension of the low-dimensional manifold and ${v_b}$ are tuned by 5-fold cross-validation.
We compare our method with both linear and non-linear versions of AHISD and CHISD \cite{cevikalp2010face}, where $98\%$ energy by PCA is retained in non-linear AHISD and the value of error penalty $C$ in CHISD is set as same as \cite{cevikalp2010face}.
For LEML, $\eta $ is tuned from $1e\textnormal{-}3$ to $1e3$ and the value of $\zeta$ is tuned from $0.1$ to $1$.
For RMML, $\lambda $ is set to $0.1$ and $t$ is tuned from $0.2$ to $0.8$. 
For CDL, the distance metric is learned with linear discriminant analysis (LDA) and partial least squares (PLS) in Hilbert space. The reduced feature dimension is set to $c-1$ for LDA, where $c$ is the number of classes. 
For RSR and KGDL, we use SPAMS as a sparse solver and set other parameters as suggested in the papers. The dimension of the subspace of the Grassmann manifold in KGDL is set to 10.

There are four parameters ${\theta}$, ${\lambda _1}$, ${\lambda _2}$, and ${\lambda _3}$ for our proposed J3S method.
The weighting parameter ${\theta}$ is defined to balance two sparse representation models and adjusted based on different scales of databases.
For some databases, such as the \emph{UIUC} database, which contains only a few labeled samples of each class, the statistical dictionary may be challenging to represent the reliable and complete information of a class. 
Thus, we set a small value $\theta=0.1$ to mitigate the impact of the first term in Eq (\ref{CSRGC}), while we set $\theta=0.6$ for other databases.
All regularization parameters ${\lambda _1}$, ${\lambda _2}$, and ${\lambda _3}$ are all set to $1e\textnormal{-}3$.
Moreover, we implement a common backbone \emph{VGG-VD16} network pre-trained on the ImageNet database for feature extractor in this paper.
The maximum number of iterations is set to 50, and we take an early stop when the difference between loss before and after two iterations is less than $1e\textnormal{-}6$.

\subsection{Image and Image-Set Classification}

Table~\ref{result1} compares the image classification results using the proposed method, as well as all selected competing methods. 
Furthermore, we included two deep learning methods, DRM~\cite{hayat2014deep} and MMDML~\cite{Lu_2015_CVPR}, by quoting the results reported over \emph{YTC} database. 
Note that classic methods randomly choose nine image sets for each class, where three image sets for training and the rest six for testing and report the average accuracy of ten times. 
On the contrary, the selected deep-based models divide the whole database into five folds with nine image sets for each class and keep training the model until convergence while the network input is still based on a single picture.
It is clear that our proposed J3S approach outperforms all competition methods
over the \emph{FMD}, \emph{UIUC}, and \emph{YTC} databases.
For the \emph{ETH-80} database, our method outperforms other competition methods except for DRM, which might due to the way of processing data.
Unlike the J3S model, DRM first computes the LBP features of the training data and generates a subset randomly from the training samples, which enhances the capability of the deep network.
Moreover, during testing, the learned DRM model is used to reconstruct each image of a test image-set sample, and a voting strategy is adopted for classification.
In contrast, our J3S model treats all samples in each image set as a classification object.
Tables~\ref{result1} also show that using joint two dictionaries could help integrate multiple information to facilitate classification tasks.



\subsection{Noisy image classification}
\temp{
We simulate \emph{i.i.d.} Gaussian noise with standard deviation $\sigma$ from $5$ to $20$ for all training and testing data on the \emph{UIUC} and \emph{FMD} databases to generate noisy images for classification.
Table~\ref{result3} and~\ref{result4} show the classification accuracy of two databases under different noise ratios.
The results show that the proposed J3S method outperforms the competition methods subject to noise corruption. 
Also, we can observe a clear downward trend for classification accuracy from the two tables as the noise level increases. 
Simultaneously, our proposed J3S model can still perform better than any other models in all noise levels.
Moreover, we also find that for methods with supervised dimension reduction, \ie SPDML-Stein and CDL-LDA, their performances at a relatively higher noise level $\sigma=20$ are more elevated than performance at a lower noise level $\sigma=15$ on the \emph{UIUC} database.
This is partially due to the fact that models can discard the less critical noise part during the dimensionality reduction process.
}


\begin{table}[t]\footnotesize
\centering
\fontsize{8}{14pt}\selectfont
\begin{tabular}{|c||c|c|c|c|}
\hline
Methods & $\sigma=5$ & $\sigma=10$ & $\sigma=15$ & $\sigma=20$ \\ 
\hline
SPDML-Stein & 66.57 & 67.96 & 64.81 & 65.37 \\
\hline
SPDML-AIRM & 74.26 & 73.06 & 71.02 & 70.37 \\
\hline
LEML & 69.17 & 69.81 & 67.22 & 66.02 \\
\hline
CDL-LDA & 79.63 & 77.96 & 76.85 & 76.94\\
\hline
CDL-PLS & 76.48 & 74.91 & 72.41 & 70.74\\
\hline
J3S & \bf{83.61} & \bf{82.41} & \bf{81.39} & \bf{80.46}\\
\hline
\end{tabular}
\vspace{0.10in}
\caption{Classification accuracy (in $\%$) on noisy data with different noise levels ($\sigma$) of the \emph{UIUC} database.}\label{result3}
\end{table}

\begin{table}[t]\footnotesize
\centering
\fontsize{8}{14pt}\selectfont
\begin{tabular}{|c||c|c|c|c|}
\hline
Methods & $\sigma=5$ & $\sigma=10$ & $\sigma=15$ & $\sigma=20$ \\ 
\hline
SPDML-Stein & 62.86 & 58.54 & 54.94 & 52.12 \\
\hline
SPDML-AIRM & 66.60 & 62.52 & 58.68 & 54.46 \\
\hline
LEML & 66.52 & 63.82 & 59.80 & 56.76 \\
\hline
CDL-LDA & 76.60 & 74.62 & 71.90 & 70.98\\
\hline
CDL-PLS & 74.24 & 71.68 & 69.02 & 66.44\\
\hline
J3S & \bf{82.04} & \bf{80.10} & \bf{76.05} & \bf{74.46}\\
\hline
\end{tabular}
\vspace{0.10in}
\caption{Classification accuracy (in $\%$) on noisy data with different noise levels ($\sigma$) of the \emph{FMD} database.}\label{result4}
\end{table}

\subsection{Ablation Study}
\subsubsection{Weight Analysis}
\temp{
As Eq (\ref{CSRGC}) stated, the weighting parameter $\theta$ is used to balance two dictionary models.
We conduct an experiment to investigate the effectiveness of weighting parameter settings on classification accuracy.
Table~\ref{weight} shows the image classification accuracy averaged over the \emph{ETH-80}, \emph{FMD}, \emph{UIUC}, and \emph{YTC} databases, with different values of $\theta$.
For \emph{ETH-80}, \emph{FMD}, and \emph{YTC} databases, it is obvious that as the weighting parameter of the statistical model increases from $\theta=0.1$, the classification accuracy rate has a significantly increase, which is due to the introduction of higher-order Gaussian information.
Compared with the spatial model, the statistical Gaussian model is more discriminative but still needs the spatial model to capture the local information.
Thus, when the weighting parameter increases to a certain level (\eg $\theta$ from 0.5 to 0.7), continuing to increase will cause the accuracy to fluctuate within a small range.
In contrast, we observe that the classification accuracy on the \emph{UIUC} database becomes worse when increasing the weight parameter $\theta$.
A potential explanation is that, the statistical dictionary may be challenging and unreliable to represent the entire information of a class if only given a few labeled samples such as the $\emph{UIUC}$ database.
Simultaneously, when the weighting parameter $\theta$ increases, the impact of the statistical term on the loss function becomes more significant, so the classification accuracy decreases by about $ 1\% $.

\begin{table}[t]\footnotesize
\centering
\setlength{\tabcolsep}{1mm}{
\fontsize{8}{14pt}\selectfont
\begin{tabular}{|c||c|c|c|c|c|c|}
\hline
Databases & $\theta=0.1$ & $\theta=0.3$ & $\theta=0.5$ & $\theta=0.7$ & $\theta=0.9$ \\ 
\hline
ETH-80 & 94.00 & 95.00 & \bf{96.00} & 96.00 & 95.00  \\
\hline
FMD & 80.58 & 81.86 & \bf{82.50} & 82.36 & 82.50  \\
\hline
UIUC & \bf{84.07} & 83.06 & 83.24 & 83.43 & 83.33 \\
\hline
YTC & 76.70 & 80.92 & 82.70 & \bf{83.09} & 83.01 \\
\hline
\end{tabular}}
\vspace{0.10in}
\caption{Classification accuracy (in $\%$) v.s the weighting parameter $\theta$.}\label{weight}
\end{table}

Moreover, Table~\ref{weight2} shows the classification results of the J3S model w/ and w/o PCA with different values of the weighting parameter $\theta$. 
We can observe that our proposed J3S model achieves the best performance with the same weighting parameter ($\theta=0.1$) under two settings.
Meanwhile, we find that, after PCA dimensionality reduction, the highest accuracy rate has improved slightly from $83.98\%$ to $84.07\%$ while the algorithm complexity has decreased, which demonstrates the effectiveness of the J3S model w/ PCA strategy.
}

\begin{table}[t]\footnotesize
\centering
\setlength{\tabcolsep}{1mm}{
\fontsize{8}{14pt}\selectfont
\begin{tabular}{|c||c|c|c|c|c|c|}
\hline
Settings & $\theta=0.1$ & $\theta=0.3$ & $\theta=0.5$ & $\theta=0.7$ & $\theta=0.9$ \\ 
\hline
J3S w/o PCA  & \bf{83.98} & 83.33 & 82.50 & 82.36 & 82.50  \\
\hline
J3S & \bf{84.07} & 83.06 & 83.24 & 83.43 & 83.33 \\
\hline
\end{tabular}}
\vspace{0.10in}
\caption{J3S model w/ or w/o PCA under different values of the weighting parameter $\theta$ on the \emph{UIUC} database.}\label{weight2}
\end{table}

\begin{figure}[t]
\centering
\includegraphics[width=1.0\linewidth]{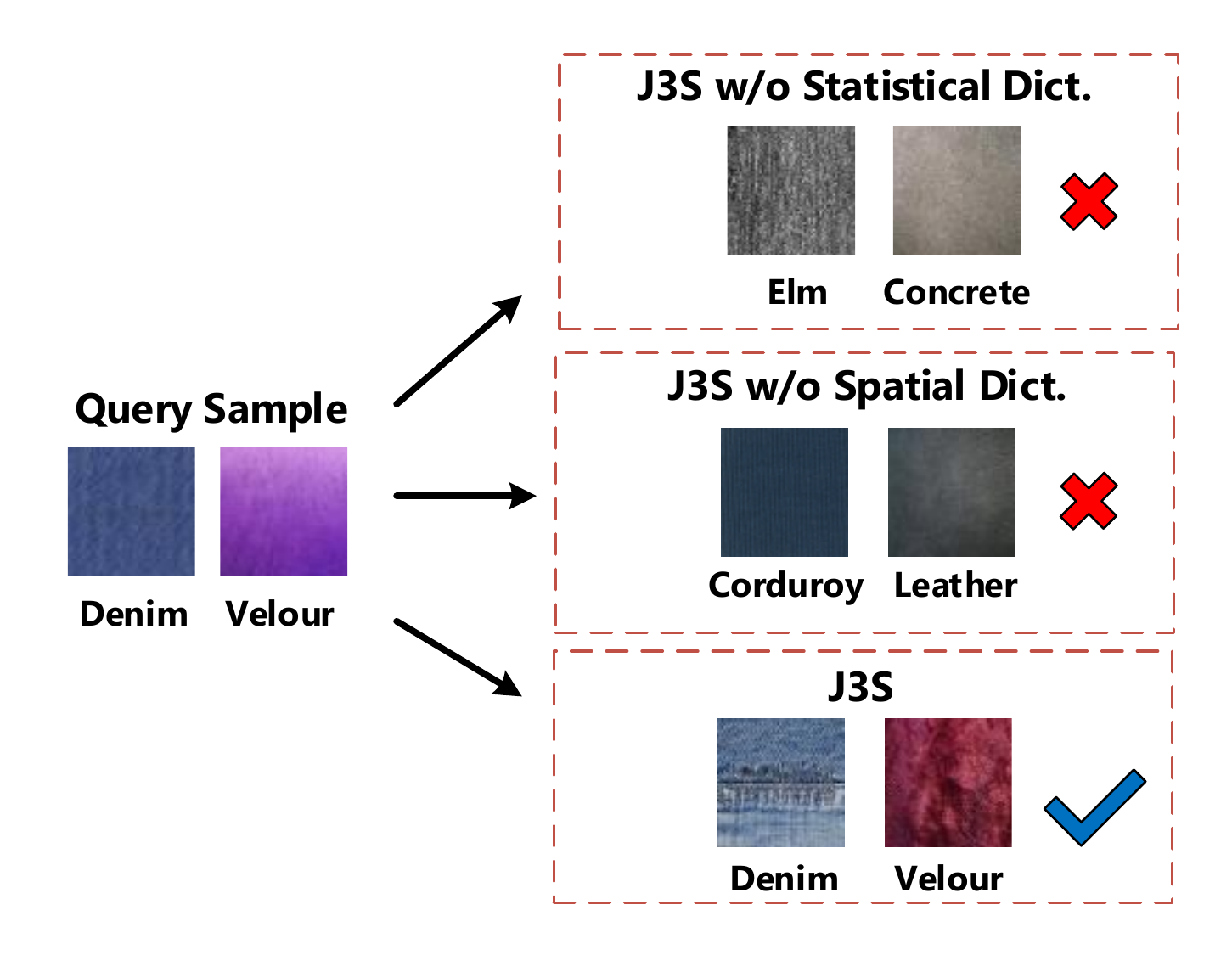}
\caption{Visualization of different results based on the sparse representation of joint dictionaries or two single dictionaries, respectively. Two query samples selected from different categories are shown in the row on the left, and corresponding classification results are shown in the right row.}
\label{visual}
\end{figure}

\subsubsection{Feature Selection for Dictionary Construction}
\temp{
To investigate the proper feature for dictionary construction to extract local information, we try different unitary dictionaries based on deep feature maps, original gray images, and RGB images, respectively.
Table \ref{result2} shows the classification accuracy with different feature selections for our proposed J3S model on clean and noisy image data ($\sigma=20$) of the \emph{UIUC} database.
We can observe that without a unitary dictionary, the model performs worse than other settings under the noisy condition, which can explain the role of the spatial module on the robustness of the model from one side.
Meanwhile, we observe that the accuracy of methods based on a unitary dictionary of gray or RGB image drops smaller than the unitary dictionary based on a deep feature map under the noise condition. 
Since noisy images are fed into a deep CNN structure pre-trained on clean data, it is more difficult to distinguish the noise portion than shallow image features.

\begin{table}[t]\footnotesize
\centering
\fontsize{8}{14pt}\selectfont
\begin{tabular}{|c||c|c|}
\hline
Settings & Acc (Clean) & Acc (Noise) \\ 
\hline
w/o unitary dict. & 83.43 & 80.00\\
\hline
w/ Deep feature based unitary dict. & \bf{84.07} & \bf{80.46}\\
\hline
w/ Gray image based unitary dict. & 83.15 & 80.12\\
\hline
w/ RGB image based unitary dict. & 80.37 & 80.18\\
\hline
\end{tabular}
\vspace{0.10in}
\caption{Classification accuracy (in $\%$) on clean and noisy ($\sigma\!=\!20$) data of the \emph{UIUC} database.}\label{result2}
\end{table}

}

\subsubsection{Convergence Analysis}
\temp{
According to Eq (\ref{CSRGC}), it is easily proved that each part of the objective function is convex.
With alternation minimization, this optimization problem can be divided into two convex problems and solved easily.
Fig.~\ref{convergence} shows that the J3S model can converge within a few iterations with different regularization parameters.
Meanwhile, we can observe that after only one iteration, the J3S model will reduce the value of the loss function to near convergence.
Moreover, compared Fig.~\ref{convergence}$(a)$ with Fig.~\ref{convergence}$(b)$, we find that increasing regularization parameters $\lambda_1$ and $\lambda_2$ for the J3S model will make the convergence more stable but
require more iterations to converge fully, \ie small regularization parameters need more iterations to converge just like $28$ in Fig.~\ref{convergence}$(b)$ than $4$ in Fig.~\ref{convergence}$(a)$. 
A potential explanation is that the regularization parameters $\lambda_1$ and $\lambda_2$ of two sparse models are adopted to control the stringency of sparse vectors $\alpha_j$ and $\gamma_j$ for a query sample with index $j$.
The constraint of the regular terms on the J3S model is proportional to the scale of the corresponding regularization parameters.


}

\begin{figure}[t]
    \centering
	  \subfloat[$\lambda_1=\lambda_2=1e\textnormal{-}3, \lambda_3=0.1$.]{
	  \includegraphics[width=0.47\linewidth]{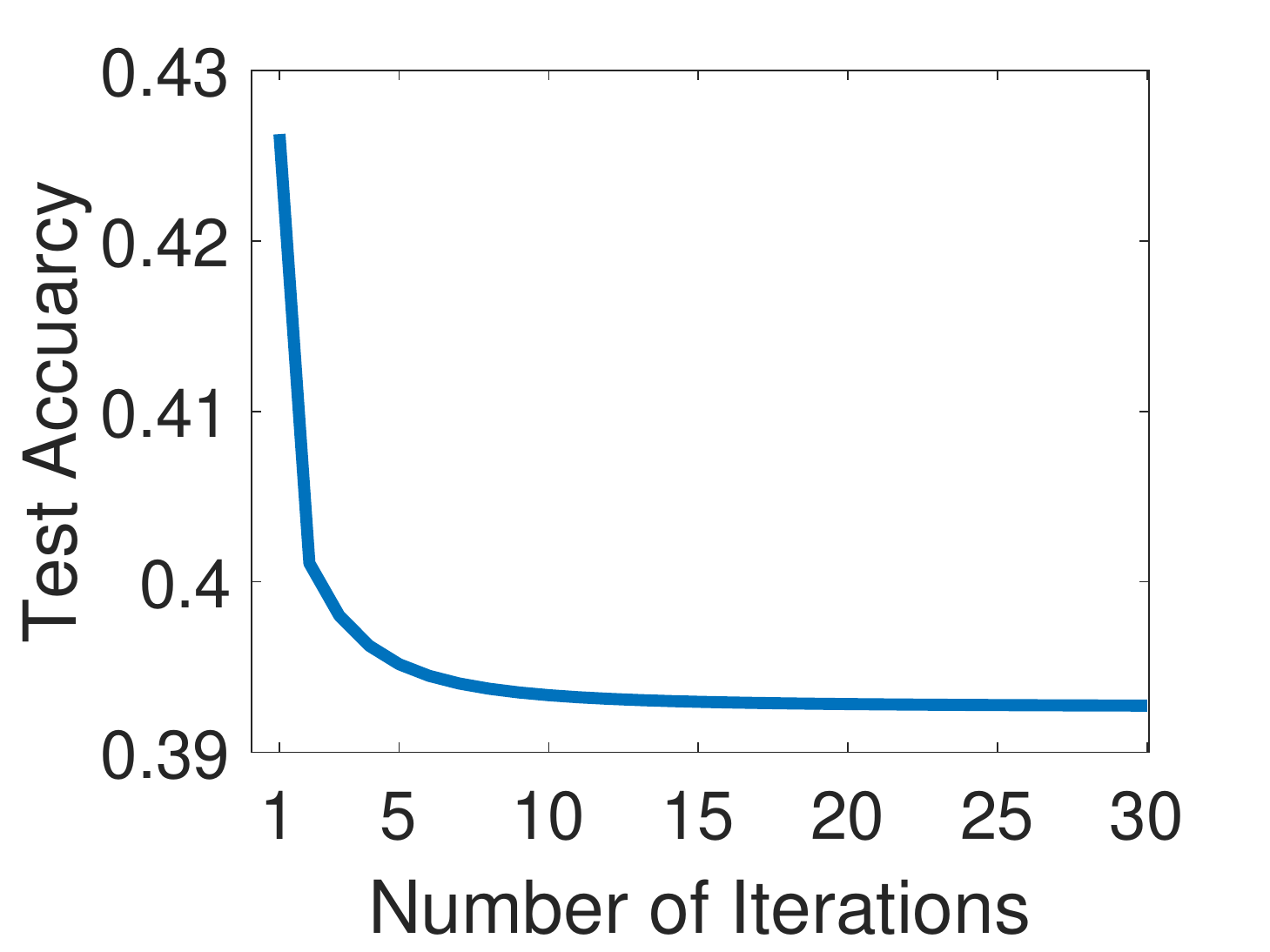}}
    \label{1a}
	  \subfloat[$\lambda_1=\lambda_2=\lambda_3=0.1$.]{
        \includegraphics[width=0.47\linewidth]{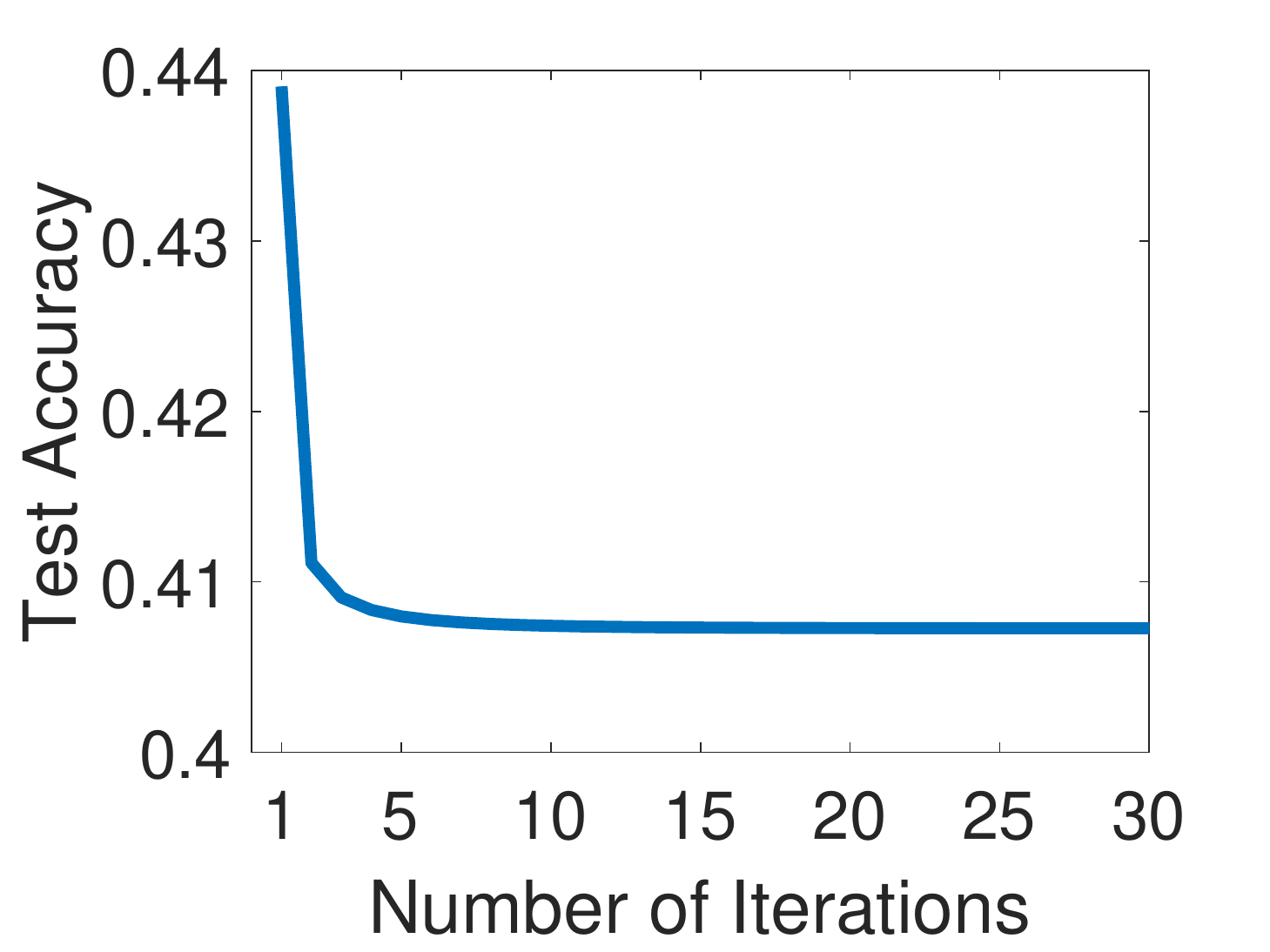}}
    \label{1b}
     \vspace{0.10in}
	  \caption{Convergence curves with different settings of three regularization parameters $\lambda_1$, $\lambda_2$, and $\lambda_3$ on the \emph{UIUC} database. We only show the loss value of 30 iterations on the basis of ensuring the convergence of the model.} 
	  \label{convergence} 
\end{figure}

\begin{table}[t]\footnotesize
\centering
\setlength{\tabcolsep}{1mm}{
\fontsize{8}{14pt}\selectfont
\begin{tabular}{|c||c|c|c|c|c|c|}
\hline
Methods & $K=1$ & $K=2$ & $K=3$ & $K=4$ & $K=5$ & $K=6$ \\ 
\hline
SPDML-Stein & 42.41 & 53.70 & 60.56 & 64.81 & 66.57 & 68.24 \\
\hline
SPDML-AIRM & 48.06 & 62.13 & 68.24 & 70.83 & 72.69 & 74.72 \\
\hline
LEML & N/A & 53.43 & 61.76 & 65.09 & 67.59 & 69.17 \\
\hline
CDL-LDA & 26.48 & 38.89 & 47.22 & 51.56 & 65.19 & 78.89 \\
\hline
CDL-PLS & 55.19 & 67.96 & 72.64 & 74.56 & 75.56 & 76.39 \\
\hline
CNN+Mean+SVM & 59.60 & 70.44 & 75.56 & 77.78 & 78.70 & 81.67 \\
\hline
CNN+Gau+SVM & 61.61 & 72.22 & 77.40 & 79.58 & 81.90 & 84.01 \\
\hline
J3S & \bf{61.94} & \bf{75.56} & \bf{78.89} & \bf{80.83} & \bf{82.41} & \bf{84.07} \\
\hline
\end{tabular}}
\vspace{0.10in}
\caption{Classification accuracy (in $\%$) with only $K$ labeled training samples of each class on the \emph{UIUC} database.}  
\label{result_few}
\end{table}

\subsection{Few-shot Classification}
\temp{
We consider a popular and challenging setting, \ie the few-shot setting for image classification tasks, to investigate the robustness of the proposed J3S model with only a few supervised information.
For few-shot learning, the whole dataset is divided into two non-overlapping label sets, \ie training set and testing set.
Following the meta-learning strategy~\cite{vinyals2016matching}, most existing few-shot methods construct $N$-way $K$-shot tasks on the testing set to evaluate the generalized performance of proposed models trained on the training set.
Here $N$ and $K$ are the number of class and labeled samples, respectively, and $N$ and $K$ are often small values.

Typical few-shot methods follow a methodology that learns the models only from the training set and tests the classification accuracy on the testing set.
Unlike this learning strategy, based on representative learning, the proposed J3S model solves the classification task by utilizing the reconstruction loss computed by two coefficient vectors learned from both labeled training samples and each query sample.
A little different from the typical few-shot setting, we set the value of $N$ to the total number of categories instead of a commonly used fixed value $5$. 
In contrast, we still set $K$ to a small number, identical to the few-shot setting.

Table~\ref{result_few} shows classification accuracy on the \emph{UIUC} database with different numbers of the labeled samples in each class for dictionary learning.  
As mentioned before, the \emph{UIUC} database only has $6$ training samples for each class in total, so the general classification setting with all training samples satisfies one kind of few-shot setting with $K = 6$.
From table~\ref{result_few}, we can observe that our proposed J3S model outperforms the other methods in all few-shot settings, \ie $K$ from 1 to 6.
Note that almost all Gaussian-based models, \ie CDL-LDA, CDL-PLS, two Gaussian-based CNN models, and our proposed J3S model perform well even when using only a few training samples for each class.
The result demonstrates that the global statistical model can implement rich information of a class, enhancing the classification capability in few-shot tasks.
Meanwhile, we observe that our J3S model and the method CNN+Gau+SVM perform better than the other two Gaussian-based methods CDL-LDA and CDL-PLS.
A potential explanation is that, CDL-LDA and CDL-PLS learned only based on the covariance matrix while the J3S model and CNN+Gau+SVM method jointly implement first-order and second-order information, which leads to better accuracy.
Moreover, CDL-LDA performs poorly when $K=1$ because for the 1-shot setting, the feature dimension $\mathbf{D}$ is much larger than the number of training samples $m$ (here $m\!=\!18 \times K\!=\!18$).
After feature projection (so-called dimensionality reduction), the LDA-based model cannot maintain the difference among neighbors and keep the within-class variance to a minimum value for Nearest Neighbor classification. 
In comparison to LDA, PLS has proven to be helpful in this situation as it is not limited by the low discrimination dimensions.

Additionally, as the supervised information decreases, \ie $K$ is selected from $6$ to $2$, the gap between our proposed J3S model and the method CNN+Gau+SVM becomes larger.
This might due to the effectiveness of spatial information when the statistical model cannot provide sufficient information for classification.
However, when $K=1$, the difference among all methods will become minor because only one support sample of each category can be utilized for learning, which results in insufficient information for classification, and the model is vulnerable to bias.
}

\section{Conclusion} \label{section6}

In this paper, we proposed a novel J3S model for robust image and image-set classification. 
Gaussian distribution is used to keep high-order image statistical information, while patch-based sparse representation is used to capture image local structure.
\temp{
A simple and effective dimensionality reduction operation by PCA is utilized to reduce the algorithm complexity.
We conducted experiments over four popular databases 
for clean and noisy image classification tasks.
Moreover, we conducted parameter sensitivity analysis and tested the robustness of the algorithm under the popular few-shot setting.}
Results show that our proposed method achieves superior performance compared to a variety of algorithms under several settings.

\bibliographystyle{IEEEtran}
\bibliography{TMM2020_J3S}

\begin{thebibliography}{10}
\providecommand{\url}[1]{#1}
\csname url@samestyle\endcsname
\providecommand{\newblock}{\relax}
\providecommand{\bibinfo}[2]{#2}
\providecommand{\BIBentrySTDinterwordspacing}{\spaceskip=0pt\relax}
\providecommand{\BIBentryALTinterwordstretchfactor}{4}
\providecommand{\BIBentryALTinterwordspacing}{\spaceskip=\fontdimen2\font plus
\BIBentryALTinterwordstretchfactor\fontdimen3\font minus
  \fontdimen4\font\relax}
\providecommand{\BIBforeignlanguage}[2]{{%
\expandafter\ifx\csname l@#1\endcsname\relax
\typeout{** WARNING: IEEEtran.bst: No hyphenation pattern has been}%
\typeout{** loaded for the language `#1'. Using the pattern for}%
\typeout{** the default language instead.}%
\else
\language=\csname l@#1\endcsname
\fi
#2}}
\providecommand{\BIBdecl}{\relax}
\BIBdecl

\bibitem{deng2009imagenet}
J.~Deng, W.~Dong, R.~Socher, L.~Li, K.~Li, and L.~F., ``Imagenet: A large-scale
  hierarchical image database,'' in \emph{2009 IEEE conference on computer
  vision and pattern recognition}.\hskip 1em plus 0.5em minus 0.4em\relax Ieee,
  2009, pp. 248--255.

\bibitem{ytc}
M.~Kim, S.~Kumar, V.~Pavlovic, and H.~Rowley, ``Face tracking and recognition
  with visual constraints in real-world videos,'' in \emph{Proceedings of the
  IEEE Conference on Computer Vision and Pattern Recognition}, 2008, pp. 1--8.

\bibitem{eth80}
B.~Leibe and B.~Schiele, ``Analyzing appearance and contour based methods for
  object categorization,'' in \emph{Proceedings of the IEEE Conference on
  Computer Vision and Pattern Recognition}, vol.~2, 2003, pp. 402--409.

\bibitem{gao2015patch}
S.~Gao, Z.~Zeng, K.~Jia, T.-H. Chan, and J.~Tang, ``Patch-set-based
  representation for alignment-free image set classification,'' \emph{IEEE
  Transactions on Circuits and Systems for Video Technology}, vol.~26, no.~9,
  pp. 1646--1658, 2015.

\bibitem{wang2020graph}
R.~Wang, X.-J. Wu, and J.~Kittler, ``Graph embedding multi-kernel metric
  learning for image set classification with grassmann manifold-valued
  features,'' \emph{IEEE Transactions on Multimedia}, 2020.

\bibitem{zhu2014fast}
Z.~Zhu, F.~Guo, H.~Yu, and C.~Chen, ``Fast single image super-resolution via
  self-example learning and sparse representation,'' \emph{IEEE Transactions on
  Multimedia}, vol.~16, no.~8, pp. 2178--2190, 2014.

\bibitem{gao2014concurrent}
S.~Gao, L.-T. Chia, I.~W.-H. Tsang, and Z.~Ren, ``Concurrent single-label image
  classification and annotation via efficient multi-layer group sparse
  coding,'' \emph{IEEE Transactions on multimedia}, vol.~16, no.~3, pp.
  762--771, 2014.

\bibitem{feng2016kernel}
Q.~Feng and Y.~Zhou, ``Kernel combined sparse representation for disease
  recognition,'' \emph{IEEE Transactions on Multimedia}, vol.~18, no.~10, pp.
  1956--1968, 2016.

\bibitem{li2016image}
L.~Li, D.~Wu, J.~Wu, H.~Li, W.~Lin, and A.~C. Kot, ``Image sharpness assessment
  by sparse representation,'' \emph{IEEE Transactions on Multimedia}, vol.~18,
  no.~6, pp. 1085--1097, 2016.

\bibitem{cherian2016riemannian}
A.~Cherian and S.~Sra, ``Riemannian dictionary learning and sparse coding for
  positive definite matrices,'' \emph{IEEE transactions on neural networks and
  learning systems}, vol.~28, no.~12, pp. 2859--2871, 2016.

\bibitem{wang2020learning}
B.~Wang, Y.~Hu, J.~Gao, Y.~Sun, F.~Ju, and B.~Yin, ``Learning adaptive
  neighborhood graph on grassmann manifolds for video/image-set subspace
  clustering,'' \emph{IEEE Transactions on Multimedia}, vol.~23, pp. 216--227,
  2020.

\bibitem{jing2020learning}
P.~Jing, Y.~Shang, L.~Nie, Y.~Su, J.~Liu, and M.~Wang, ``Learning low-rank
  sparse representations with robust relationship inference for image
  memorability prediction,'' \emph{IEEE Transactions on Multimedia}, 2020.

\bibitem{wen2017frist}
B.~Wen, S.~Ravishankar, and Y.~Bresler, ``Frist—flipping and rotation
  invariant sparsifying transform learning and applications,'' \emph{Inverse
  Problems}, vol.~33, no.~7, p. 074007, 2017.

\bibitem{Kobayashi2014Dirichlet}
T.~Kobayashi, ``Dirichlet-based histogram feature transform for image
  classification,'' in \emph{Proceedings of the IEEE Conference on Computer
  Vision and Pattern Recognition}, 2014, pp. 3278--3285.

\bibitem{Li2015From}
P.~Li, X.~Lu, and Q.~Wang, ``From dictionary of visual words to subspaces:
  Locality-constrained affine subspace coding,'' in \emph{Proceedings of the
  IEEE Conference on Computer Vision and Pattern Recognition}, 2015, pp.
  2348--2357.

\bibitem{li2017second}
P.~Li, J.~Xie, Q.~Wang, and W.~Zuo, ``Is second-order information helpful for
  large-scale visual recognition?'' in \emph{Proceedings of the IEEE
  international conference on computer vision}, 2017, pp. 2070--2078.

\bibitem{nguyen2020prominent}
T.~T. Nguyen, T.~P. Nguyen, and F.~Bouchara, ``Prominent local representation
  for dynamic textures based on high-order gaussian-gradients,'' \emph{IEEE
  Transactions on Multimedia}, 2020.

\bibitem{cheng2020joint}
H.~Cheng and B.~Wen, ``Joint statistical and spatial sparse representation for
  robust image and image-set classification,'' in \emph{2020 IEEE International
  Conference on Image Processing (ICIP)}.\hskip 1em plus 0.5em minus
  0.4em\relax IEEE, 2020, pp. 2411--2415.

\bibitem{kim2007discriminative}
T.~Kim, J.~Kittler, and R.~Cipolla, ``Discriminative learning and recognition
  of image set classes using canonical correlations,'' \emph{IEEE Transactions
  on Pattern Analysis and Machine Intelligence}, vol.~29, no.~6, 2007.

\bibitem{cevikalp2010face}
H.~Cevikalp and B.~Triggs, ``Face recognition based on image sets,'' in
  \emph{Proceedings of the IEEE Conference on Computer Vision and Pattern
  Recognition}, 2010, pp. 2567--2573.

\bibitem{wang2012covariance}
R.~Wang, H.~Guo, L.~S. Davis, and Q.~Dai, ``Covariance discriminative learning:
  A natural and efficient approach to image set classification,'' in
  \emph{Proceedings of the IEEE Conference on Computer Vision and Pattern
  Recognition}, 2012, pp. 2496--2503.

\bibitem{huang2015log}
Z.~Huang, R.~Wang, S.~Shan, X.~Li, and X.~Chen, ``Log-euclidean metric learning
  on symmetric positive definite manifold with application to image set
  classification.'' in \emph{International Conference on Machine Learning},
  2015, pp. 720--729.

\bibitem{arsigny2007geometric}
V.~Arsigny, P.~Fillard, X.~Pennec, and N.~Ayache, ``Geometric means in a novel
  vector space structure on symmetric positive-definite matrices,'' \emph{SIAM
  journal on matrix analysis and applications}, vol.~29, no.~1, pp. 328--347,
  2007.

\bibitem{zhu2018towards}
P.~Zhu, H.~Cheng, Q.~Hu, Q.~Wang, and C.~Zhang, ``Towards generalized and
  efficient metric learning on riemannian manifold.'' in \emph{IJCAI}, 2018,
  pp. 3235--3241.

\bibitem{wang2016raid}
Q.~Wang, P.~Li, W.~Zuo, and L.~Zhang, ``\uppercase{RAID}-\uppercase{G}: Robust
  estimation of approximate infinite dimensional \uppercase{G}aussian with
  application to material recognition,'' in \emph{Proceedings of the IEEE
  Conference on Computer Vision and Pattern Recognition}, 2016, pp. 4433--4441.

\bibitem{wang2017g}
Q.~Wang, P.~Li, and L.~Zhang, ``G2\uppercase{D}e\uppercase{N}et: Global
  \uppercase{G}aussian distribution embedding network and its application to
  visual recognition,'' in \emph{Proceedings of the IEEE Conference on Computer
  Vision and Pattern Recognition}, 2017, pp. 2730--2739.

\bibitem{hayat2014deep}
M.~Hayat, M.~Bennamoun, and S.~An, ``Deep reconstruction models for image set
  classification,'' \emph{IEEE transactions on pattern analysis and machine
  intelligence}, vol.~37, no.~4, pp. 713--727, 2014.

\bibitem{Lu_2015_CVPR}
J.~Lu, G.~Wang, W.~Deng, P.~Moulin, and J.~Zhou, ``Multi-manifold deep metric
  learning for image set classification,'' in \emph{The IEEE Conference on
  Computer Vision and Pattern Recognition (CVPR)}, June 2015.

\bibitem{sun2017learning}
H.~Sun, X.~Zhen, Y.~Zheng, G.~Yang, Y.~Yin, and S.~Li, ``Learning deep match
  kernels for image-set classification,'' in \emph{Proceedings of the IEEE
  Conference on Computer Vision and Pattern Recognition}, 2017, pp. 3307--3316.

\bibitem{harandi2012sparse}
M.~Harandi, C.~Sanderson, R.~Hartley, and B.~Lovell, ``Sparse coding and
  dictionary learning for symmetric positive definite matrices: A kernel
  approach,'' in \emph{European Conference on Computer Vision}, 2012, pp.
  216--229.

\bibitem{harandi2013dictionary}
M.~Harandi, C.~Sanderson, C.~Shen, and B.~Lovell, ``Dictionary learning and
  sparse coding on \uppercase{G}rassmann manifolds: An extrinsic solution,'' in
  \emph{Proceedings of the IEEE International Conference on Computer Vision},
  2013, pp. 3120--3127.

\bibitem{wright2009robust}
J.~Wright, A.~Yang, A.~Ganesh, S.~Sastry, and Y.~Ma, ``Robust face recognition
  via sparse representation,'' \emph{IEEE Transactions on Pattern Analysis and
  Machine Intelligence}, vol.~31, no.~2, pp. 210--227, 2009.

\bibitem{kang2011feature}
L.-W. Kang, C.-Y. Hsu, H.-W. Chen, C.-S. Lu, C.-Y. Lin, and S.-C. Pei,
  ``Feature-based sparse representation for image similarity assessment,''
  \emph{IEEE Transactions on multimedia}, vol.~13, no.~5, pp. 1019--1030, 2011.

\bibitem{wen2015structured}
B.~Wen, S.~Ravishankar, and Y.~Bresler, ``Structured overcomplete sparsifying
  transform learning with convergence guarantees and applications,''
  \emph{International Journal of Computer Vision}, vol. 114, no. 2-3, pp.
  137--167, 2015.

\bibitem{wang2020hardness}
L.~Wang, S.~WANG, D.~Kong, B.~Yin \emph{et~al.}, ``Hardness-aware dictionary
  learning: Boosting dictionary for recognition,'' \emph{IEEE Transactions on
  Multimedia}, 2020.

\bibitem{yang2009linear}
J.~Yang, K.~Yu, Y.~Gong, and T.~Huang, ``Linear spatial pyramid matching using
  sparse coding for image classification,'' in \emph{Proceedings of the IEEE
  Conference on Computer Vision and Pattern Recognition}, 2009, pp. 1794--1801.

\bibitem{kulis2009low}
B.~Kulis, M.~A. Sustik, and I.~S. Dhillon, ``Low-rank kernel learning with
  bregman matrix divergences.'' \emph{Journal of Machine Learning Research},
  vol.~10, no.~2, 2009.

\bibitem{pati1993orthogonal}
Y.~C. Pati, R.~Rezaiifar, and P.~S. Krishnaprasad, ``Orthogonal matching
  pursuit: Recursive function approximation with applications to wavelet
  decomposition,'' in \emph{Proceedings of 27th Asilomar conference on signals,
  systems and computers}.\hskip 1em plus 0.5em minus 0.4em\relax IEEE, 1993,
  pp. 40--44.

\bibitem{wen2020set}
B.~Wen, Y.~Li, Y.~Li, and Y.~Bresler, ``A set-theoretic study of the
  relationships of image models and priors for restoration problems,''
  \emph{arXiv preprint arXiv:2003.12985}, 2020.

\bibitem{ravishankar2015sparsifying}
S.~Ravishankar and Y.~Bresler, ``Sparsifying transform learning with efficient
  optimal updates and convergence guarantees,'' \emph{IEEE Transactions on
  Signal Processing}, vol.~63, no.~9, pp. 2389--2404, 2015.

\bibitem{niesen2009adaptive}
U.~Niesen, D.~Shah, and G.~Wornell, ``Adaptive alternating minimization
  algorithms,'' \emph{IEEE Transactions on Information Theory}, vol.~55, no.~3,
  pp. 1423--1429, 2009.

\bibitem{martinez2001pca}
A.~M. Martinez and A.~C. Kak, ``Pca versus lda,'' \emph{IEEE transactions on
  pattern analysis and machine intelligence}, vol.~23, no.~2, pp. 228--233,
  2001.

\bibitem{Sharan2009Material}
L.~Sharan, R.~Rosenholtz, and E.~H. Adelson, ``Material perception: What can
  you see in a brief glance?'' \emph{Journal of Vision}, vol.~9, no.~8, pp.
  784--784, 2009.

\bibitem{uiuc}
Z.~Liao, J.~Rock, Y.~Wang, and D.~Forsyth, ``Non-parametric filtering for
  geometric detail extraction and material representation,'' in
  \emph{Proceedings of the IEEE Conference on Computer Vision and Pattern
  Recognition}, June 2013.

\bibitem{Wang2008Manifold}
R.~Wang, S.~Shan, X.~Chen, and W.~Gao, ``Manifold-manifold distance with
  application to face recognition based on image set,'' in \emph{Proceedings of
  the IEEE Conference on Computer Vision and Pattern Recognition}, 2008, pp.
  1--8.

\bibitem{Wang2009Manifold}
R.~Wang and X.~Chen, ``Manifold discriminant analysis,'' in \emph{Proceedings
  of the IEEE Conference on Computer Vision and Pattern Recognition}, 2009, pp.
  429--436.

\bibitem{harandi2014manifold}
M.~Harandi, M.~Salzmann, and R.~Hartley, ``From manifold to manifold:
  Geometry-aware dimensionality reduction for \uppercase{SPD} matrices,'' in
  \emph{European Conference on Computer Vision}, 2014, pp. 17--32.

\bibitem{vinyals2016matching}
O.~Vinyals, C.~Blundell, T.~Lillicrap, D.~Wierstra \emph{et~al.}, ``Matching
  networks for one shot learning,'' in \emph{Advances in neural information
  processing systems}, 2016, pp. 3630--3638.

\end{thebibliography}
\end{document}